\documentclass{article}

\PassOptionsToPackage{numbers, compress}{natbib}

\usepackage[preprint]{neurips_2026}

\usepackage[utf8]{inputenc} 
\usepackage[T1]{fontenc}    
\usepackage{hyperref}       
\usepackage{url}            
\usepackage{booktabs}       
\usepackage{amsfonts}       
\usepackage{nicefrac}       
\usepackage{microtype}      
\usepackage{xcolor}         
\usepackage{amsfonts}       
\usepackage{nicefrac}       
\usepackage{microtype}      
\usepackage{amsmath,amssymb,amsthm}
\usepackage[ruled,vlined]{algorithm2e}
\usepackage{verbatim}
\usepackage{braket}
\usepackage{todonotes}
\usepackage{paralist}
\usepackage{caption}
\usepackage{wrapfig}
\usepackage{graphicx}
\usepackage{caption}
\usepackage{multicol}
\usepackage{multirow}
\usepackage{subfigure}
\usepackage{float}
\usepackage{textcomp,booktabs}
\usepackage{color}
\usepackage{subcaption}
\usepackage{enumitem} 
\usepackage[table]{xcolor}

\definecolor{mygray}{gray}{.9}

\newcommand{\ourmodel}{{S$^2$Aligner}\xspace}
\newcommand{\nop}[1]{}

\newtheorem{theorem}{Theorem}[section]

\makeatletter
  \newcommand\figcaption{\def\@captype{figure}\caption}
  \newcommand\tabcaption{\def\@captype{table}\caption}
\makeatother
\title{\ourmodel: Pair-Efficient Transferable Pre-Training \\ for Sparse Text-Attributed Graphs}

\author{%
Yuhan Wang\textsuperscript{1},
Haopeng Zhang\textsuperscript{1},
Yibo Ding\textsuperscript{1},
Jiaqi Yu\textsuperscript{1},
Xinyu Zhao\textsuperscript{1} \\
\textbf{Yuhang Liu\textsuperscript{2},
Ziwei Zhang\textsuperscript{1},
Xiao Wang\textsuperscript{1},
Ruijie Wang\textsuperscript{1}\thanks{Corresponding author: \texttt{ruijiew@buaa.edu.cn}.}} \\
\textsuperscript{1}Beihang University, Beijing, China\\
\textsuperscript{2}Tianjin University, Tianjin, China\\
}

\begin{document}
\maketitle
\begin{abstract}

Pre-training on text-attributed graphs (TAGs) is central to building transferable graph foundation models, where LLM-as-Aligner methods align graph and text representations through the semantic knowledge of large language models. However, these methods usually assume that node texts provide sufficient and reliable supervision, an assumption often violated in real-world sparse TAGs. When textual anchors are missing, noisy, or uneven across domains, graph structures must be aligned with weak semantic evidence, leading to unreliable structure--semantics correspondence and sparsity-induced transfer bias. This paper presents \textbf{\ourmodel}, a \textbf{S}parsity-aware and \textbf{S}tructure-enhanced LLM-as-Aligner framework for graph--text pre-training on sparse TAGs. The key idea is to decouple semantic alignment from structural modeling, allowing topology-aware signals to enhance alignment without contaminating the shared semantic space. Specifically, \ourmodel decomposes graph--text representations into semantic and structural components, uses structure-oriented reconstruction with consistency control to inject reliable topology cues into text representations, and suppresses inconsistent structural signals under textual sparsity. Moreover, \ourmodel introduces sparsity-aware cross-domain risk balancing, which calibrates domain risks through a global-domain density ratio and downweights unreliable sparse samples via graph reliability estimation. Theoretical analysis shows that this objective reduces cross-domain generalization gaps by controlling domain risk discrepancy. Extensive experiments across diverse graph domains, sparsity levels, and downstream tasks demonstrate that \ourmodel consistently outperforms existing baselines. 


\end{abstract}

\section{Introduction}
\vspace{-1mm}
\label{introduction}
Graph foundation models (GFMs) have recently emerged as a promising paradigm for inductive graph learning. They aim to distill transferable structural and semantic priors from large-scale graph corpora, reducing the need for task-specific supervision and costly fine-tuning~\cite{DBLP:journals/corr/abs-2310-11829,huang2024can,ren2024survey}. This paradigm is particularly suitable for text-attributed graphs (TAGs), common in citation networks, social platforms, and e-commerce systems~\cite{paper1,feng2024taglas,chen2022brainnet,chen2020multi}, where node texts provide rich semantics and edges capture relational structure. Recently, cross-modal graph-text alignment has become a central direction for TAG pretraining~\cite{G2P2,zhu2025graphclip,adaligner}, aligning graph or subgraph embeddings with textual embeddings, often leveraging large language models (LLMs) as aligners — the so-called LLM-as-Aligner paradigm (Figure~\ref{fig:problem_definition}).

Despite its promise, the LLM-as-Aligner paradigm relies on a strong assumption: textual evidence should be sufficient to provide reliable supervision for graph-text alignment~\cite{G2P2,zhu2025graphclip,adaligner}. This assumption is often violated in real-world TAGs. Node texts can be missing, short, noisy, or highly uneven across domains. In this case, the text view becomes an incomplete anchor for the graph view. Rich graph structures must be aligned with weak and ambiguous semantic signals, making the learned structure--semantics correspondence unreliable. The problem is further amplified by recent methods that use LLM-generated subgraph summaries as supervision~\cite{zhu2025graphclip,adaligner}. When textual contexts are sparse, LLMs have to infer missing semantics from incomplete evidence, which may introduce uncertainty and bias into the alignment signal. This raises a fundamental question:

{\centering \textit{
how can graph-text pretraining learn transferable structure--semantics correspondence when the cross-modal supervision itself is sparse and unreliable?
}}

This paper studies LLM-as-Aligner pretraining on sparse TAGs, preserving transferable pretraining while reducing dependence on dense textual supervision. It introduces two coupled challenges:

\textbf{Challenge 1: Unreliable structure--semantics alignment. }
In sparse text-attributed graphs, textual anchors are incomplete or ambiguous, while the graph structure remains rich. This imbalance can cause the model to misalign structural patterns with noisy text, failing to learn faithful structure--semantics correspondences. Figure~\ref{fig:motivation} illustrates this issue: (a) text sparsity increases semantic uncertainty, weakening alignment anchors; (b) structural supplementation helps under dense text but may amplify mismatches under sparse text; (c) sparse-text graph--text retrieval performs substantially worse than full-text retrieval, highlighting the limited robustness of LLM-as-Aligner models.

\textbf{Challenge 2: Sparsity-induced transfer bias.}
Sparse texts also weaken zero-shot generalization. Different domains require different densities of alignment signals, since some domains can transfer with coarse semantic anchors while others depend on finer graph--text correspondences. Text sparsity makes such reliability uneven across domains. Standard pretraining may therefore overfit to domains with sufficient anchors or learn biased correlations from noisy ones, causing imbalanced domain risks and unstable transfer to unseen sparse TAGs. Thus, robust cross-domain generalization requires sparsity-aware risk balancing to extract domain-invariant representations under sparse supervision.

\begin{figure}[t]
    \centering

    \begin{minipage}[t]{0.22\textwidth}
        \centering
        \includegraphics[width=\linewidth]{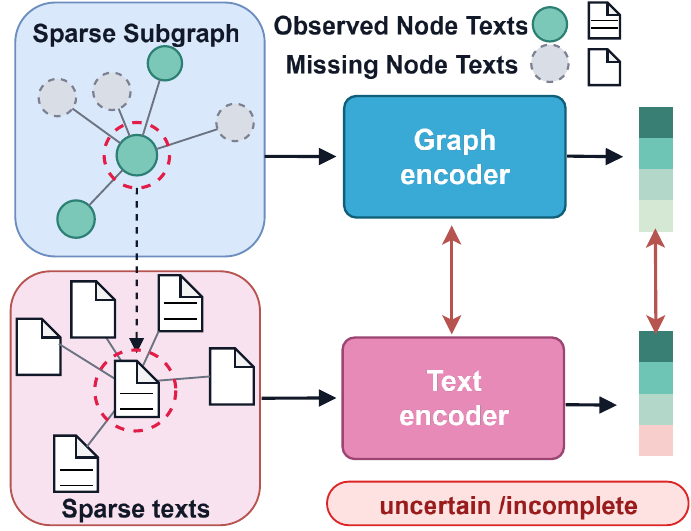}
        \captionof{figure}{
        LLM-as-Aligner.
        }
        \label{fig:problem_definition}
    \end{minipage}
    \hfill
    \begin{minipage}[t]{0.77\textwidth}
        \centering
        \begin{minipage}[t]{0.33\linewidth}
            \centering
            \includegraphics[width=\linewidth]{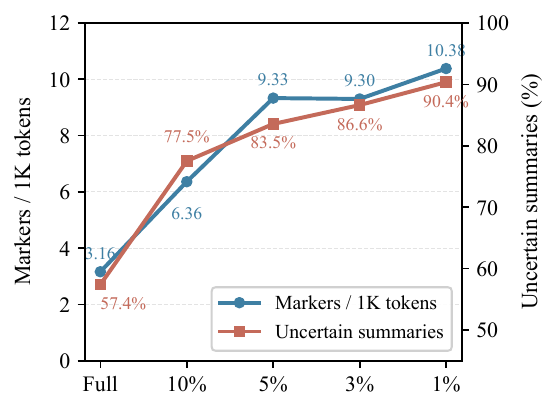}
            \vspace{-2mm}
            \centerline{\footnotesize (a) Uncertainty vs. Sparsity}
        \end{minipage}
        \hfill
        \begin{minipage}[t]{0.32\linewidth}
            \centering
            \includegraphics[width=\linewidth]{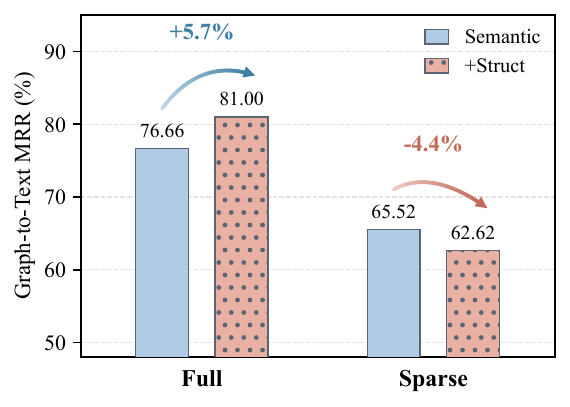}
            \vspace{-2mm}
            \centerline{\footnotesize (b) Structural supplementation}
        \end{minipage}
        \hfill
        \begin{minipage}[t]{0.33\linewidth}
            \centering
            \includegraphics[width=\linewidth]{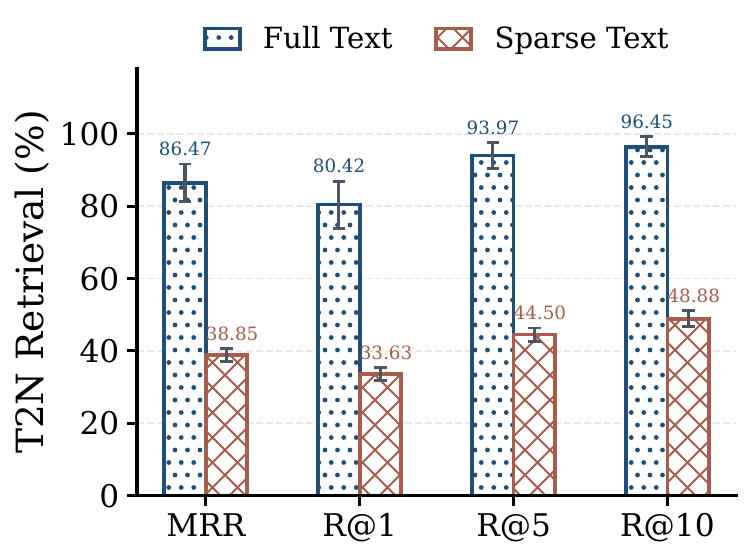}
            \vspace{-2mm}
            \centerline{\footnotesize (c) Full vs. sparse retrieval}
        \end{minipage}

        \captionof{figure}{
        (a) Summary uncertainty increases.
        (b) Naive structural supplementation becomes harmful. 
        (c) Full vs. Sparse alignment.}

        \label{fig:motivation}
    \end{minipage}
\end{figure}

To address the above challenges, we propose \textbf{\ourmodel}, a sparsity-aware and structure-enhanced LLM-as-Aligner framework for graph--text pretraining on sparse TAGs. The overall idea is to decouple semantic alignment from structural modeling, so that sparse and uncertain structural cues can be used to enhance graph--text alignment without directly contaminating the shared semantic space. For \textbf{Challenge 1}, \ourmodel separates each graph--text representation into semantic and structural components. The semantic components define the main alignment space, while the structural components are modeled through structure-oriented reconstruction. By using reconstruction consistency to control structural enhancement, \ourmodel injects reliable topology-aware signals into text representations and suppresses inconsistent structural cues, thereby mitigating structural negative transfer under textual sparsity. For \textbf{Challenge 2}, \ourmodel introduces a sparsity-aware cross-domain risk balancing mechanism. It calibrates domain risks with a global-domain density ratio and further uses graph reliability to reduce the impact of unreliable sparse samples. This encourages the model to focus on domain-invariant graph--text correspondences rather than sparse domain-specific correlations. Theoretical analysis suggests that this objective can reduce the cross-domain generalization gap by controlling domain risk discrepancy.

Our contributions are summarized as follows:

\begin{list}{\labelitemi}{\leftmargin=1em \itemindent=0em}

\item We propose \ourmodel, a sparsity-aware and structure-enhanced LLM-as-Aligner framework for sparse TAGs. It transforms uncertain topological signals into verifiable structural enhancements and incorporates sparsity-aware risk balancing for better cross-domain generalization.

\item We design a sparsity-aware cross-domain risk balancing mechanism. It filters noisy samples using graph reliability evaluation and calibrates cross-domain risks via a global-domain density ratio, reducing domain-biased alignment and improving generalization.

\item Extensive experiments validate our method across sparsity levels, graph domains, and downstream tasks. Notably, even with only $10\%$ textual attributes, it surpasses baselines trained with full attributes, demonstrating superior sparsity generalization and data efficiency.

\end{list}

\nop{
\section{Preliminary}
\label{Preliminary}
\vspace{-1mm}
\textbf{LLM-as-Aligner for Graph Problems.}
Following GraphCLIP, we define a text-attributed graph (TAG) as 
$\mathcal{G}=(\mathcal{V},\mathcal{E},\mathcal{T})$, where $\mathcal{V}$ and $\mathcal{E}$ are the node and edge sets, and 
$\mathcal{T}=\{(g_i,t_i)\}_{i=1}^{M}$ denotes graph-text pairs. 
Here, $g_i$ can be a node, a sampled subgraph, or an entire graph, and $t_i$ is its associated text, structural summary, or label prompt.

Given graph encoder $f_G$ and text encoder $f_T$, we obtain 
$\mathbf{z}_{g_i}=f_G(g_i)$ and $\mathbf{z}_{t_i}=f_T(t_i)$. 
The two modalities are aligned by:
\begin{equation}
\mathcal{L}_{\mathrm{align}}(\mathcal{T})
=
\frac{1}{2M}
\sum_{i=1}^{M}
\left[
\mathrm{CE}
\left(
\frac{\mathrm{sim}(\mathbf{z}_{g_i}, \mathbf{Z}_{T})}{\tau}, i
\right)
+
\mathrm{CE}
\left(
\frac{\mathrm{sim}(\mathbf{z}_{t_i}, \mathbf{Z}_{G})}{\tau}, i
\right)
\right],
\end{equation}
where $\mathbf{Z}_{G}$ and $\mathbf{Z}_{T}$ are batch graph and text embeddings, $\tau$ is a temperature parameter, and $\mathrm{CE}$ is the cross-entropy loss that treats the matched graph-text pair as the positive pair.

\textbf{Sparse Text-Attributed Graph Problem Definition.}
We define a sparse TAG as 
$\widetilde{\mathcal{G}}=(\mathcal{V},\mathcal{E},\widetilde{\mathcal{T}})$, where 
$\widetilde{\mathcal{T}}\subseteq\mathcal{T}$ denotes an incomplete set of graph-text pairs due to missing or unreliable textual attributes. 
Accordingly, the graph and text encoders are optimized by minimizing 
$\mathcal{L}_{\mathrm{align}}(\widetilde{\mathcal{T}})$ over the sparse graph-text pairs, i.e.,
$\min_{\theta_G,\theta_T}\mathcal{L}_{\mathrm{align}}(\widetilde{\mathcal{T}})$.
Although the formulation is general, we instantiate $g_i$ as a sampled subgraph in practice to better exploit local structural context. For zero-shot prediction, each candidate label $c\in\mathcal{Y}$ is converted into a textual description $t_c$, and the final label is obtained by 
$\hat{y}=\arg\max_{c\in\mathcal{Y}}\mathrm{sim}(f_G(g),f_T(t_c))$.}

\section{Preliminary}
\label{sec:preliminary}
\vspace{-1mm}

\paragraph{LLM-as-Aligner for graph problems.}
Following GraphCLIP~\cite{zhu2025graphclip}, we define a text-attributed graph (TAG) as
$\mathcal{G}=(\mathcal{V},\mathcal{E},\mathcal{T})$, where $\mathcal{V}$ and $\mathcal{E}$ denote the node and edge sets, and
$\mathcal{T}=\{(g_i,t_i)\}_{i=1}^{M}$ denotes a collection of graph-text pairs.
Here, $g_i$ can be a node, a sampled subgraph, or an entire graph, and $t_i$ is the associated textual information, such as a textual attribute, structural summary, or label prompt. Given a graph encoder $f_{\mathrm{G}}$ and a text encoder $f_{\mathrm{T}}$, each pair is encoded as 
\begin{equation} 
\mathbf{z}_{g,i} = \operatorname{READOUT}\!\left(f_{\mathrm{G}}(g_i)\right), \qquad 
\mathbf{z}_{t,i} = \operatorname{READOUT}\!\left(f_{\mathrm{T}}(t_i)\right), 
\end{equation}

For a mini-batch $\mathcal{T}$, graph and text representations are aligned by a symmetric contrastive loss:
\begin{equation}
\mathcal{L}_{\mathrm{align}}(\mathcal{T})
=
\frac{1}{2|\mathcal{T}|}
\sum_{(g_i,t_i) \in\mathcal{T}}
\left[
-
\log
\frac{
\exp\left(\operatorname{sim}(\mathbf{z}_{g,i},\mathbf{z}_{t,i})/\tau\right)
}{
\sum_{j}
\exp\left(\operatorname{sim}(\mathbf{z}_{g,i},\mathbf{z}_{t,j})/\tau\right)
}
-
\log
\frac{
\exp\left(\operatorname{sim}(\mathbf{z}_{g,i},\mathbf{z}_{t,i})/\tau\right)
}{
\sum_{j}
\exp\left(\operatorname{sim}(\mathbf{z}_{g,j},\mathbf{z}_{t,i})/\tau\right)
}
\right],
\label{eq:pre_align}
\end{equation}
where $\operatorname{sim}(\cdot,\cdot)$ denotes cosine similarity and $\tau$ is a temperature parameter. 
This objective treats the matched graph--text pair as the positive pair and all other pairs in the mini-batch as negatives.

\paragraph{Sparse text-attributed graph problem.}
In real-world TAGs, textual attributes are often missing, incomplete, or unreliable.
We define a sparse TAG as
$\widetilde{\mathcal{G}}=(\mathcal{V},\mathcal{E},\widetilde{\mathcal{T}})$, where
$\widetilde{\mathcal{T}}\subseteq\mathcal{T}$ denotes the observed sparse graph--text pair set.
A straightforward solution is to optimize the graph and text encoders over the sparse pairs:
\begin{equation}
    \min_{\theta_{\mathrm{G}},\theta_{\mathrm{T}}}
    \mathcal{L}_{\mathrm{align}}(\widetilde{\mathcal{T}}),
\end{equation}
where $\theta_{\mathrm{G}}$ and $\theta_{\mathrm{T}}$ are the parameters of $f_{\mathrm{G}}$ and $f_{\mathrm{T}}$, respectively.
However, directly aligning sparse graph--text pairs can be unstable, since missing or unreliable textual attributes provide weak supervision and may introduce noisy cross-modal correspondences.

Although the above formulation is general, we instantiate $g_i$ as a sampled subgraph in practice to exploit local structural context.
After pre-training, zero-shot prediction is performed by converting each candidate label $c\in\mathcal{Y}$ into a textual description $t_c$ and selecting the label with the highest graph--prompt similarity:
\begin{equation}
    \hat{y}
    =
    \arg\max_{c\in\mathcal{Y}}
    \operatorname{sim}
    \left(
    f_{\mathrm{G}}(g),
    f_{\mathrm{T}}(t_c)
    \right).
\end{equation}



\section{Method}
\vspace{-2mm}
\label{Method}
\begin{figure}[t]
    \centering
    \includegraphics[width=\linewidth]{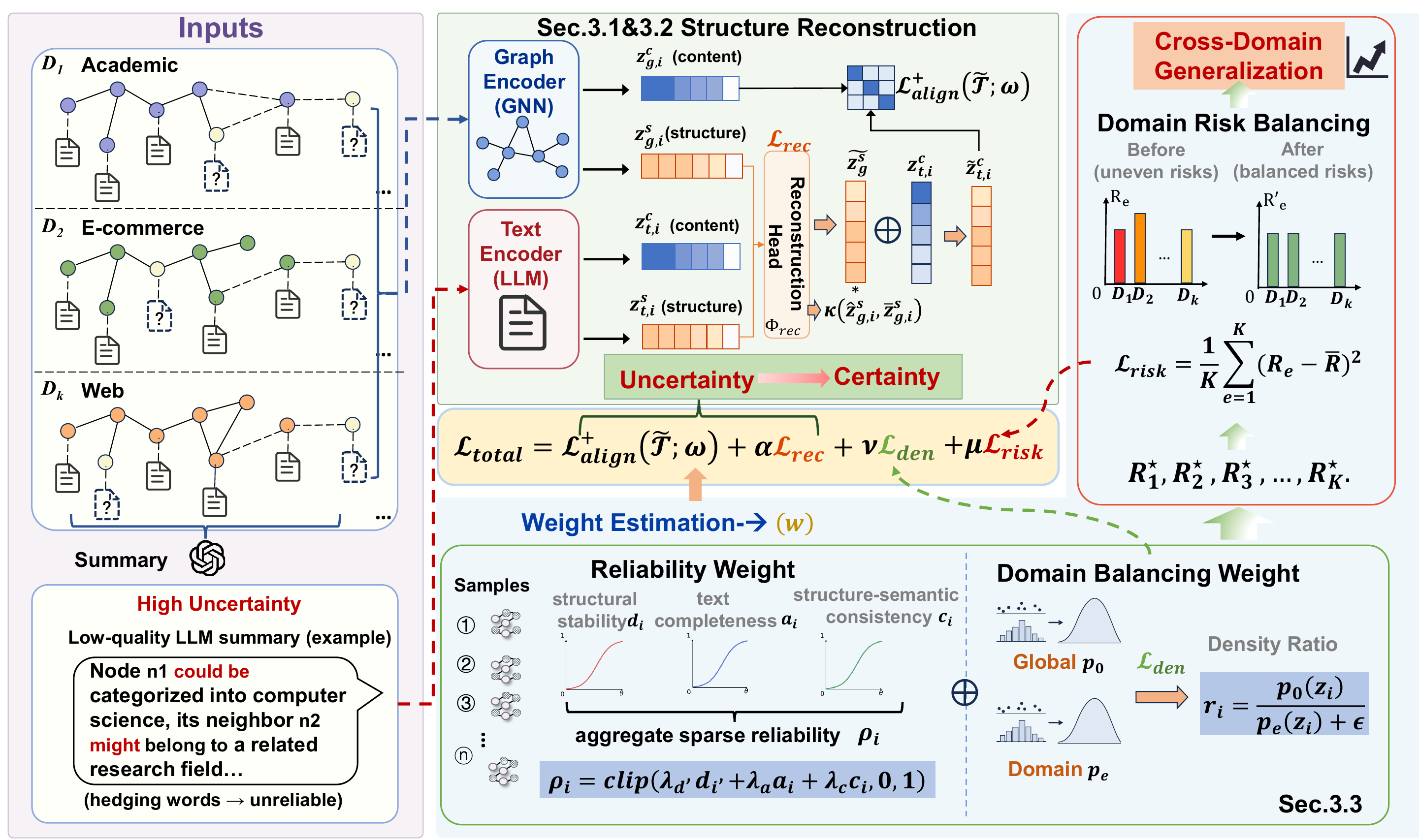}
    \caption{The overall framework of \ourmodel is shown in the figure above.  
It encodes sparse text-attributed graphs into content and structural components and applies latent reconstruction on the structural branch to reduce negative transfer from sparse text. We further introduce Sparse-aware Cross-domain Risk Balancing, aligning multi-source domain risks via density estimation and reliability weighting to learn domain-invariant features for robust cross-domain generalization.}
    \label{fig:framework}
\end{figure}
In this section, we present \textbf{\ourmodel}, a sparsity- and structure-enhanced LLM-as-Aligner framework (Figure~\ref{fig:framework}) for pretraining on sparse text-attributed graphs. 
First, we decouple graph--text representations into semantic and structural components, separating semantic alignment from structural modeling (Sec.~\ref{sec:disentangle}). 
Next, structure-oriented reconstruction injects reliable structural signals while suppressing inconsistent cues (Sec.~\ref{sec:reconstruction}). 
Finally, Sparse-aware Cross-domain Risk Balancing reweights source-domain risks using cross-domain support and graph reliability to enhance zero-shot cross-graph generalization (Sec.~\ref{sec:scrb}). 
Together, these components enable robust graph--text alignment under sparse supervision and enhance transferability to unseen domains.

\subsection{Content-Structure Factorization}
\vspace{-1mm}
\label{sec:disentangle}


In text-sparse cross-modal alignment scenarios, topological structure is crucial for capturing graph–text correspondences. 
However, structural information generated from text often exhibits high uncertainty, which may introduce noise and lead to negative transfer. 
To address this issue, we disentangle content semantics and structural information into two independent branches:
\vspace{-2mm}
\begin{itemize}[leftmargin = 8pt]
    \item \textbf{Semantic alignment branch:} relies solely on deterministic content information, extracting stable semantics from node attributes while avoiding interference from high-uncertainty structural text.
    \item \textbf{Structural branch:} kept independent and purified through a structure reconstruction task, extracting reliable topological information to fully leverage the positive transfer effect of structure.
\end{itemize}
\vspace{-2mm}
For each graph sample $g$, we construct two complementary text views: a semantic summary $S$ that characterizes attribute semantics, and a structure-aware description $S^s$ that represents topological features. The semantic summary focuses on the content information at the node attribute level, while the structure-aware description captures neighborhood patterns, connection topology, and the structural roles of nodes.

On the graph modality side, we learn and obtain the graph content embedding $\mathbf{z}_{g,i}^c$ and graph structure embedding $\mathbf{z}_{g,i}^s$ from the subgraph representation and topological structure, respectively. On the text modality side, the two types of text views are encoded into the text content embedding $\mathbf{z}_{t,i}^c$ and text structure embedding $\mathbf{z}_{t,i}^s$, respectively.

The content embedding pair $(\mathbf{z}_{g,i}^c, \mathbf{z}_{t,i}^c)$ forms the main semantic alignment space; the structural embedding pair $(\mathbf{z}_{g,i}^s, \mathbf{z}_{t,i}^s)$ is used for the topology-oriented structure reconstruction in Sec.~\ref{sec:reconstruction}, which completes the denoising and purification of structural text representations as well as the modeling of positive transfer.



\subsection{Structure-Oriented Reconstruction}
\vspace{-1mm}
\label{sec:reconstruction}
To filter out noise and redundancy in high-uncertainty structural text while extracting reliable topological information, 
we use the graph-side structural embedding as a gradient-detached topological target and train the text-side structural embedding to reconstruct it:
\begin{equation}
    \mathcal{L}_{\mathrm{rec}}
    =
    \frac{1}{|\widetilde{\mathcal{T}}|}
    \sum_{(g_i,t_i)\in\widetilde{\mathcal{T}}}
    \left[
    1-
    \operatorname{sim}
    \left(
    \widehat{\mathbf{z}}_{g,i}^s,
    \bar{\mathbf{z}}_{g,i}^s
    \right)
    \right]
    =
    \frac{1}{|\widetilde{\mathcal{T}}|}
    \sum_{(g_i,t_i)\in\widetilde{\mathcal{T}}}
    \left[
    1-
    \operatorname{sim}
    \left(
    \phi_{\mathrm{rec}}(\mathbf{z}_{t,i}^s),
    \operatorname{detach}(\mathbf{z}_{g,i}^s)
    \right)
    \right],
    \label{eq:rec}
\end{equation}
where $\widehat{\mathbf{z}}_{g,i}^s=\phi_{\mathrm{rec}}(\mathbf{z}_{t,i}^s)$ is the reconstructed structural embedding, $\bar{\mathbf{z}}_{g,i}^s=\operatorname{detach}(\mathbf{z}_{g,i}^s)$ is the gradient-detached topology target, and $\phi_{\mathrm{rec}}(\cdot)$ is a lightweight reconstruction head.

We further use the reconstruction consistency as a reliability gate for structural injection.
Instead of introducing a separate score equation, we directly define the enhanced text embedding as
\begin{equation}
    \widetilde{\mathbf{z}}_{t,i}^c
    =
    \operatorname{norm}
    \left(
    \mathbf{z}_{t,i}^c
    +
    \kappa
    \left(
    \widehat{\mathbf{z}}_{g,i}^s,
    \bar{\mathbf{z}}_{g,i}^s
    \right)
    \widehat{\mathbf{z}}_{g,i}^s
    \right),
    \qquad
    \kappa(\mathbf{a},\mathbf{b})
    =
    \frac{
    1+
    \operatorname{sim}(\mathbf{a},\mathbf{b})
    }{2}.
    \label{eq:enhanced_text}
\end{equation}
Here, $\kappa(\cdot,\cdot)\in[0,1]$ measures how well the structure-aware text recovers graph topology.
If the structural description is unreliable, the gate shrinks and the injected structural signal is suppressed.
Thus, only recoverable, topology-consistent structural information is injected into the text anchor.

For each sample, we define the enhanced contrastive loss as
\begin{equation}
\ell_i^{+}
=
\frac{1}{2}
\left[
-
\log
\frac{
\exp\left(
\operatorname{sim}
\left(
\mathbf{z}_{g,i}^c,
\widetilde{\mathbf{z}}_{t,i}^c
\right)/\tau
\right)
}{
\sum_{(g_j,t_j)\in\widetilde{\mathcal{T}}}
\exp\left(
\operatorname{sim}
\left(
\mathbf{z}_{g,i}^c,
\widetilde{\mathbf{z}}_{t,j}^c
\right)/\tau
\right)
}
-
\log
\frac{
\exp\left(
\operatorname{sim}
\left(
\mathbf{z}_{g,i}^c,
\widetilde{\mathbf{z}}_{t,i}^c
\right)/\tau
\right)
}{
\sum_{(g_j,t_j)\in\widetilde{\mathcal{T}}}
\exp\left(
\operatorname{sim}
\left(
\mathbf{z}_{g,j}^c,
\widetilde{\mathbf{z}}_{t,i}^c
\right)/\tau
\right)
}
\right].
\label{eq:sample_align}
\end{equation}
To connect the reconstruction module with the risk balancing module, we write the batch-level alignment loss in a weighted form. Therefore, the alignment loss and base training loss is:
\begin{equation}
    \mathcal{L}_{\mathrm{align}}^{+}
    \left(
    \widetilde{\mathcal{T}};
    \boldsymbol{\omega}
    \right)
    =
    \frac{
    \sum_{(g_i,t_i)\in\widetilde{\mathcal{T}}}
    \omega_i
    \ell_i^{+}
    }{
    \sum_{(g_i,t_i)\in\widetilde{\mathcal{T}}}
    \omega_i
    },
    ~~~
    \mathcal{L}_{\mathrm{base}}
    =
    \mathcal{L}_{\mathrm{align}}^{+}
    \left(
    \widetilde{\mathcal{T}};
    \boldsymbol{\omega}
    \right)
    +
    \alpha
    \mathcal{L}_{\mathrm{rec}},
    \label{eq:base_loss}
\end{equation}
where $\alpha$ controls the strength of structure-oriented reconstruction. Next, we introduce how to design $\boldsymbol{\omega}$ to adaptively adjust each sample's contribution to the overall loss and to guide the model to focus on invariant features across modalities and views.


\subsection{Sparse-aware Cross-domain Risk Balancing }
\label{sec:scrb}
\vspace{-1mm}
Sparse graph--text pre-training usually involves multiple source domains with different graph structures, textual sparsity levels, and noise patterns.
Let $\{\mathcal{D}_e\}_{e=1}^{K}$ denote $K$ source domains, where each domain contains graph--text pairs $(g_i,t_i)$.
Although the enhanced alignment loss in Eq.~\eqref{eq:base_loss} improves sample-level robustness, directly minimizing the pooled empirical risk can still overfit domain-specific sparse patterns.
To address this issue, we propose \textbf{Sparse-aware Cross-domain Risk Balancing}, which assigns each sample a reliability-aware density-ratio weight and further regularizes the risks across source domains.

\textbf{Shared-support density ratio.}
For each pair $(g_i,t_i)$ from domain $\mathcal{D}_{e(i)}$, let $\mathbf{z}_i$ denote representation of each sample. We estimate two densities over $\mathbf{z}_i$: a global density $p_{0}(\mathbf{z}_i)$ over all source domains and a domain-specific density $p_{e(i)}(\mathbf{z}_i)$ within its source domain.
The density ratio is defined as
\begin{equation}
    r_i
    =
    \left(
    \frac{
    p_{0}(\mathbf{z}_i)
    }{
    p_{e(i)}(\mathbf{z}_i)+\epsilon
    }
    \right)^{\gamma},
    \qquad
    \mathcal{L}_{\mathrm{den}}
    =
    -
    \frac{1}{|\widetilde{\mathcal{T}}|}
    \sum_{(g_i,t_i)\in\widetilde{\mathcal{T}}}
    \left[
    \log p_{0}(\mathbf{z}_i)
    +
    \log p_{e(i)}(\mathbf{z}_i)
    \right],
    \label{eq:density_ratio}
\end{equation}
where $\epsilon$ is a small constant for numerical stability, $\gamma$ controls the sharpness of density-ratio reweighting, and $\widetilde{\mathcal{T}}$ denotes the current mini-batch.
Intuitively, $r_i$ upweights samples lying in the cross-domain shared support and downweights samples dominated by domain-specific structural bias.

\textbf{Sparse graph reliability.}
Density compatibility alone cannot identify unreliable sparse samples.
We therefore introduce a sparse reliability score $\rho_i$ to measure whether the graph--text pair provides trustworthy supervision.
Specifically, we consider three complementary signals: structural stability, textual completeness, and structure--semantic consistency:
\begin{equation}
    d_i
    =
    \exp\left(\frac{1}{\log N_i}\sum_{j\in\mathcal{V}(g_i)}p_{i,j}\log p_{i,j}\right),
    \quad
    a_i
    =
    \frac{
    |\mathcal{V}_{t}(g_i)|
    }{
    |\mathcal{V}(g_i)|
    },
    \quad
    c_i
    =
    \frac{
    1+
    \operatorname{sim}
    \left(
    \mathbf{z}_{g,i}^{c},
    \mathbf{z}_{t,i}^{c}
    \right)
    }{2}.
    \label{eq:reliability_components}
\end{equation}
Here, $p_{i,j}$ denotes the probability of visiting node $j$ in subgraph $g_i$ starting from the center node $v_c$ via a random walk. $\mathbf{z}_{g,i,r}^{s}$ is the structural representation of the $r$-th perturbed view of $g_i$, $\mathcal{V}_{t}(g_i)$ denotes nodes with observed textual attributes, and $\mathcal{V}(g_i)$ denotes all nodes in the sampled graph object.
The final sparse reliability score is
\begin{equation}
    \rho_i=\mathrm{clip}(\lambda_dd_i+\lambda_aa_i+\lambda_cc_i,0,1)
    \label{eq:sparse_reliability}
\end{equation}
where $\lambda_d$, $\lambda_a$, and $\lambda_c$ control the contribution of the three reliability factors.
A sample receives a higher reliability score when it exhibits more stable topology, more complete textual attributes, and stronger structure--semantic consistency.

\textbf{Adaptive sample weighting.}
We combine the density ratio and sparse reliability into a unified sample weight.
For each mini-batch domain subset $\widetilde{\mathcal{T}}_e=\widetilde{\mathcal{T}}\cap\mathcal{D}_e$, the normalized weight is
\begin{equation}
    \omega_i
    =
    \frac{
    |\widetilde{\mathcal{T}}_{e(i)}|
    r_i\rho_i
    }{
    \sum_{(g_j,t_j)\in\widetilde{\mathcal{T}}_{e(i)}} r_j\rho_j
    },
    \qquad
    (g_i,t_i)\in\widetilde{\mathcal{T}}_{e(i)}.
    \label{eq:scrb_weight}
\end{equation}
This intra-domain normalization keeps the average weight within each source domain close to one, preventing domains with larger density values or higher textual completeness from dominating the training objective.

Built upon the enhanced per-sample contrastive loss $\ell_i^{+}$ derived in Eq.~\eqref{eq:sample_align}, we further formulate the weighted empirical risk for each domain $\mathcal{D}_e$ as
\begin{equation}
    R_e
    =
    \frac{
    \sum_{(g_i,t_i)\in\widetilde{\mathcal{T}}_e}
    \omega_i \ell_i^{+}
    }{
    \sum_{(g_i,t_i)\in\widetilde{\mathcal{T}}_e}
    \omega_i
    },
    \qquad
    \bar{R}
    =
    \frac{1}{K}
    \sum_{e=1}^{K}
    R_e .
    \label{eq:domain_risk}
\end{equation}

\textbf{Cross-domain risk balancing.}
To encourage the model to learn domain-invariant graph--text alignment rather than sparse domain-specific shortcuts, we penalize the dispersion of weighted risks across source domains:
\begin{equation}
    \mathcal{L}_{\mathrm{risk}}
    =
    \frac{1}{K}
    \sum_{e=1}^{K}
    \left(
    R_e-\bar{R}
    \right)^2 .
    \label{eq:risk_balance}
\end{equation}
Unlike pairwise domain constraints, this variance-style regularizer provides a compact objective for aligning all source-domain risks.

Finally, the Sparse-aware Cross-domain Risk Balancing mechanism is plugged into Eq.~\eqref{eq:base_loss} by using the weights $\boldsymbol{\omega}=\{\omega_i\}_{(g_i,t_i)\in\widetilde{\mathcal{T}}}$. The overall objective is
\begin{equation}
    \mathcal{L}_{\mathrm{total}}
    =
    \mathcal{L}_{\mathrm{align}}^{+}
    \left(
    \widetilde{\mathcal{T}};
    \boldsymbol{\omega}
    \right)
    +
    \alpha
    \mathcal{L}_{\mathrm{rec}}
    +
    \mu
    \mathcal{L}_{\mathrm{risk}}
    +
    \nu
    \mathcal{L}_{\mathrm{den}},
    \label{eq:total_loss}
\end{equation}
where $\alpha$, $\mu$, and $\nu$ control structure-oriented reconstruction, cross-domain risk balancing, and density estimation, respectively.

\subsection{Theoretical Analysis}
\label{sec:theory}

We provide a theoretical justification for the risk balancing mechanism from the perspective of weighted risk equalization.
Invariant learning methods~\cite{arjovsky2019invariant,peters2016causal,VREX} reduce out-of-domain generalization error by encouraging different source domains to share similar risks.
However, when graph--text supervision is sparse, direct risk matching may still be biased by domain-specific structural distributions and unreliable text attributes.
SCRB addresses this issue by using the global-domain density ratio to align shared structural support and using sparse reliability to suppress unreliable samples.

\begin{theorem}[Sparse-aware weighted risk equalization]
\label{thm:scrb}
Let $\mathcal{D}_1,\ldots,\mathcal{D}_K$ be $K$ source domains.
For a graph--text pair $(g,t)\sim P_e$, let $\mathbf{z}=\phi(g)$ denote its structural key, where $\phi(\cdot)$ corresponds to the graph-side structural representation used in Eq.~\eqref{eq:density_ratio}.
Let $p_e(\mathbf{z})$ be the marginal density of $\mathbf{z}$ in domain $\mathcal{D}_e$, and let
\begin{equation}
    p_0(\mathbf{z})
    =
    \sum_{e=1}^{K}
    \pi_e p_e(\mathbf{z}),
    \qquad
    r_e(\mathbf{z})
    =
    \frac{
    p_0(\mathbf{z})
    }{
    p_e(\mathbf{z})
    },
    \label{eq:theory_ratio}
\end{equation}
where $\pi_e>0$ and $\sum_{e=1}^{K}\pi_e=1$.
Assume that all domains share the same support over $\mathbf{z}$, and that the conditional reliability-weighted loss is invariant across domains:
\begin{equation}
    \mathbb{E}_{(g,t)\sim P_e}
    \left[
    \rho(g,t)\ell(g,t)
    \mid
    \mathbf{z}
    \right]
    =
    m(\mathbf{z}),
    \qquad
    \forall e\in\{1,\ldots,K\}.
    \label{eq:conditional_invariance}
\end{equation}
Then the density-ratio weighted risks are equal across all source domains:
\begin{equation}
    R_1^{\star}
    =
    R_2^{\star}
    =
    \cdots
    =
    R_K^{\star},
    \qquad
    R_e^{\star}
    =
    \mathbb{E}_{(g,t)\sim P_e}
    \left[
    r_e(\mathbf{z})
    \rho(g,t)
    \ell(g,t)
    \right].
    \label{eq:weighted_equalization}
\end{equation}
\end{theorem}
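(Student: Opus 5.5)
The argument is a direct change-of-measure computation based on the tower property. We fix a domain $e$ and condition on the structural key $\mathbf{z}=\phi(g)$. Since $r_e(\mathbf{z})$ depends on $(g,t)$ only through $\mathbf{z}$, it factors out of the inner conditional expectation. The conditional invariance assumption in Eq.~\eqref{eq:conditional_invariance} then gives
\begin{equation*}
R_e^{\star}
=
\mathbb{E}_{\mathbf{z}\sim p_e}\!\left[r_e(\mathbf{z})\,\mathbb{E}_{P_e}\!\left[\rho(g,t)\ell(g,t)\mid \mathbf{z}\right]\right]
=
\int p_e(\mathbf{z})\,\frac{p_0(\mathbf{z})}{p_e(\mathbf{z})}\,m(\mathbf{z})\,d\mathbf{z}.
\end{equation*}

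The shared-support assumption ensures that $p_e(\mathbf{z})>0$ wherever $p_0(\mathbf{z})>0$. The ratio in Eq.~\eqref{eq:theory_ratio} is therefore well defined on the common support, and the $p_e$ factors cancel. We obtain $R_e^{\star}=\int p_0(\mathbf{z})m(\mathbf{z})\,d\mathbf{z}=\mathbb{E}_{\mathbf{z}\sim p_0}[m(\mathbf{z})]$. This quantity involves only the mixture $p_0$ and the domain-invariant function $m$, so it does not depend on $e$. This yields Eq.~\eqref{eq:weighted_equalization}. The mixture form $p_0=\sum_e\pi_e p_e$ is not actually needed for the equality. It does guarantee, however, that $p_0$ is a genuine density whose support equals the common support, so no mass is lost outside it.

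The main obstacle is measure-theoretic bookkeeping rather than any conceptual difficulty. First, we must justify the tower property and the pull-out step. This requires $r_e\rho\ell$ to be integrable under $P_e$, equivalently $\mathbb{E}_{p_0}|m|<\infty$. We will state this as a mild assumption, which holds, for instance, if $\ell$ is bounded, as is the case for the contrastive loss with cosine similarity in Eq.~\eqref{eq:sample_align}, together with $\rho\in[0,1]$. Second, we must treat the sets where $p_e=0$. By the shared-support assumption these are $p_0$-null as well, so they contribute nothing. We will conclude with a remark that, with the $\epsilon$ and $\gamma$ of Eq.~\eqref{eq:density_ratio} and estimated densities, equality holds only approximately. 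The $\mathcal{L}_{\mathrm{risk}}$ penalty in Eq.~\eqref{eq:risk_balance} controls the residual dispersion.
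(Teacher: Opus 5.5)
Your proof is correct and follows essentially the same route as the paper: disintegrate over the key, cancel $r_e p_e = p_0$, and invoke a domain-invariant conditional term. The paper's appendix phrases this in a $(Z,V_e,Y)$ model, marginalizing out $V_e$ and using invariance of $p_e(Y\mid Z)$ for a predictor that depends only on $Z$, with $\rho$ omitted; you instead work directly from the theorem's stated hypothesis $\mathbb{E}_{P_e}[\rho\ell\mid\mathbf{z}]=m(\mathbf{z})$, which matches the statement as written more closely.
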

\begin{proof}
    The detailed proof is provided in Appendix~\ref{app:proof-theorem-3-1}.
\end{proof}

\section{Experiments}
\vspace{-2mm}
\label{Experiments}

\subsection{Experimental Setup}
\vspace{-1mm}
\textbf{Datasets.}
We evaluate \ourmodel on text-attributed graph datasets from academic, e-commerce, social, and Wikipedia domains. Following the graph--text pre-training setting, ogbn-ArXiv~\cite{hu2020open}, ArXiv\_2023~\cite{he2023explanations}, PubMed~\cite{sen2008collective}, ogbn-Products~\cite{hu2020open}, and Reddit~\cite{huang2024can} are used as source datasets, while Cora~\cite{sen2008collective}, CiteSeer~\cite{sen2008collective}, Books-History~\cite{yan2023comprehensive}, Ele-Computers~\cite{yan2023comprehensive}, Ele-Photo~\cite{yan2023comprehensive}, WikiCS~\cite{mernyei2020wiki}, and Instagram~\cite{huang2024can} are used as unseen target datasets. To simulate sparse TAG scenarios, we retain only 1\%, 3\%, 5\%, or 10\% of node textual attributes during pre-training. Unless otherwise specified, experiments are conducted under the 10\% sparsity setting.

\textbf{Baselines.}
We compare \ourmodel with three groups of baselines: 
(1) LLM-only methods, including SBERT~\cite{reimers2019sentence} and Qwen3-0.6B~\cite{yang2025qwen3}, which use textual information without explicitly modeling graph structure; 
(2) representative TAG methods, including GraphGPT~\cite{tang2024graphgpt}, LLaGA~\cite{chen2024llaga}, OFA~\cite{liu2023one}, ZeroG~\cite{li2024zerog}, ADAligner~\cite{adaligner}, and GraphCLIP~\cite{zhu2025graphclip}; 
and (3) self-supervised graph methods applied to TAGs, including DGI~\cite{velickovic2018deep}, GRACE~\cite{zhu2020deep}, BGRL~\cite{thakoor2021bootstrapped}, GraphMAE~\cite{hou2022graphmae}, and G2P2~\cite{G2P2}.

\textbf{Evaluation and Implementation.}
We evaluate zero-shot node classification, link prediction, and cross-modal retrieval. We report accuracy for node classification, AUC for link prediction, and MRR/Recall@K for retrieval. All results are averaged over five random seeds and reported as mean $\pm$ standard deviation. For \ourmodel, we use a graph transformer-based encoder and a frozen pre-trained language encoder. The trainable modules are optimized with graph--text contrastive learning, structure-oriented reconstruction, and reliability-aware robust training.

\subsection{RQ1: Zero-Shot Inference on Target Data}
\vspace{-1mm}
\label{Q1}
We first evaluate whether \ourmodel can generalize to unseen target graphs without task-specific fine-tuning. After pre-training on sparse source TAGs, the model is directly applied to target datasets for zero-shot inference. We consider three downstream tasks—node classification, link prediction, and cross-modal retrieval—which respectively assess semantic label prediction, topology-aware transfer, and fine-grained graph--text alignment.

\begin{table*}[t]
\centering
\caption{Zero-shot node classification accuracy (\%) on unseen target datasets under the 10\% text sparsity setting. Results are averaged over five random seeds.}
\label{tab:node-classification}
\resizebox{\textwidth}{!}{
\begin{tabular}{lcccccccc}
\toprule
\multirow{2}{*}{Method} 
& \multicolumn{2}{c}{Academic} 
& \multicolumn{3}{c}{E-commerce} 
& \multicolumn{2}{c}{Web}
& \multirow{2}{*}{Avg.} \\
\cmidrule(lr){2-3} \cmidrule(lr){4-6} \cmidrule(lr){7-8}
& Cora & CiteSeer & Books-History & Ele-Computers & Ele-Photo & WikiCS & Instagram & \\
\midrule
SBERT & 59.67 $\pm$ 1.76 & \underline{66.93 $\pm$ 1.82} & 41.18 $\pm$ 0.56 & 41.01 $\pm$ 0.26 & 38.36 $\pm$ 0.44 & \underline{60.87 $\pm$ 0.01} & 59.91 $\pm$ 0.73 & 52.56 \\
Qwen3-0.6B & 41.48 $\pm$ 1.66 & 51.13 $\pm$ 0.95 & 39.31 $\pm$ 0.61 & 12.35 $\pm$ 0.09 & 32.87 $\pm$ 0.40 & 40.94 $\pm$ 0.01 & \textbf{64.47} $\pm$ 0.63 & 40.36 \\
\midrule
GraphGPT 
& 14.45$\pm$0.37 & 18.38$\pm$0.32 
& \underline{55.58$\pm$0.03} & 17.29$\pm$0.06 & 40.89$\pm$0.02 
& 4.42$\pm$0.05 & 45.94$\pm$0.28 & 28.14 \\
LLaGA 
& 30.15$\pm$1.88 & 35.72$\pm$1.01 
& 45.48$\pm$0.92 & 28.36$\pm$0.85 & 25.91$\pm$0.47 
& 30.27$\pm$0.56 & 48.63$\pm$1.14 & 34.93 \\
OFA 
& 8.09$\pm$0.01 & 20.53$\pm$0.01 
& 7.86$\pm$0.01 & 7.40$\pm$0.07 & 28.45$\pm$0.07 
& 12.74$\pm$0.07 & 41.64$\pm$0.14 & 18.10 \\
ZeroG & 53.47{$\pm$1.19} & 50.16{$\pm$0.01} & 35.69{$\pm$0.49} & 33.79{$\pm$0.18} & 36.23{$\pm$0.50} & 51.29{$\pm$0.01} & 52.56{$\pm$0.67} & 44.74 \\
\midrule
DGI 
& 27.36$\pm$1.08 & 33.27$\pm$0.15 
& 13.39$\pm$0.33 & 13.86$\pm$0.70 & 12.49$\pm$0.64 
& 46.85$\pm$3.42 & 58.63$\pm$0.16 & 29.41 \\
GRACE 
& 34.90$\pm$0.51 & 28.69$\pm$1.98 
& 24.42$\pm$1.86 & 20.07$\pm$1.27 & 21.56$\pm$0.99 
& 55.93$\pm$0.99 & 46.75$\pm$2.68 & 33.19 \\
BGRL 
& 13.52$\pm$1.74 & 17.41$\pm$2.90 
& 6.85$\pm$2.86 & 13.45$\pm$7.96 & 7.95$\pm$4.05 
& 9.42$\pm$6.41 & 46.13$\pm$8.49 & 16.39 \\
GraphMAE 
& 11.29$\pm$3.89 & 16.77$\pm$4.86 
& 11.72$\pm$22.27 & 6.64$\pm$5.68 & 11.05$\pm$15.36 
& 9.82$\pm$3.96 & \underline{63.71$\pm$0.01}& 18.71 \\
G2P2 
& 29.78$\pm$3.04 & 34.85$\pm$6.09 
& 17.28$\pm$3.68 & 20.25$\pm$4.39 & 20.31$\pm$5.86 
& 28.32$\pm$3.84 & 53.47$\pm$3.31 & 29.18 \\
\midrule
ADAligner 
& \underline{60.51$\pm$1.64} & 64.52$\pm$1.77 
& 51.94$\pm$1.70 & 50.45$\pm$1.24 & \underline{41.06$\pm$0.92} 
& 57.65$\pm$3.01 & 54.84$\pm$1.57 & 54.42 \\
GraphCLIP 
& 57.68$\pm$0.70 & 63.73$\pm$1.91 
& 49.79$\pm$3.02 & \underline{59.09$\pm$1.84} & 36.78$\pm$2.73 
& 58.58$\pm$3.78 & 60.48$\pm$2.03 & \underline{55.16} \\
\rowcolor{gray!15}
\ourmodel 
& \textbf{66.73$\pm$1.40} & \textbf{67.25$\pm$2.32	} 
& \textbf{56.00$\pm$1.01} & \textbf{60.18$\pm$2.89} & \textbf{44.01$\pm$1.09} 
& \textbf{62.29$\pm$3.61} & 62.71$\pm$0.21 & \textbf{59.88} \\
\bottomrule
\end{tabular}}
\end{table*}
	
\begin{table*}[!t]
\centering
\begin{minipage}[!t]{0.44\textwidth}
\centering
\caption{Zero-shot link prediction performance measured by AUC.}
\label{tab:link-prediction}
\scriptsize
\setlength{\tabcolsep}{3pt}
\resizebox{\linewidth}{!}{
\begin{tabular}{lccc}
\toprule
Method & Cora & WikiCS & History \\
\midrule
SBERT & 83.11 {\tiny $\pm$ 0.41} & 78.91 {\tiny $\pm$ 1.49} & 70.66 {\tiny $\pm$ 0.23} \\
OFA & 58.90 {\tiny $\pm$ 0.28} & 59.82 {\tiny $\pm$ 0.12} & 63.97 {\tiny $\pm$ 0.13} \\
ZeroG & 84.65 {\tiny $\pm$ 0.94} & 85.18 {\tiny $\pm$ 0.14} & 83.61 {\tiny $\pm$ 0.12} \\
GraphMAE & 55.72 {\tiny $\pm$ 3.86} & 69.02 {\tiny $\pm$ 0.47} & 76.54 {\tiny $\pm$ 0.43} \\
GraphCLIP & 68.91 {\tiny $\pm$ 2.85} & 66.02 {\tiny $\pm$ 6.46} & 89.00 {\tiny $\pm$ 2.98} \\
ADAligner & 77.44 {\tiny $\pm$ 2.32} & 73.33 {\tiny $\pm$ 1.85} & 70.45 {\tiny $\pm$ 2.52} \\
\rowcolor{gray!15}
\ourmodel & \textbf{91.06}{\tiny $\pm$0.09} & \textbf{87.15}{\tiny $\pm$0.11} & \textbf{91.76}{\tiny $\pm$0.12} \\
\bottomrule
\end{tabular}}
\end{minipage}
\hfill
\begin{minipage}[!t]{0.54\textwidth}
\centering
\caption{Cross-modal retrieval results under global-level and category-level settings.}
\label{tab:retrieval-global}
\scriptsize
\setlength{\tabcolsep}{2.5pt}
\resizebox{\linewidth}{!}{
\begin{tabular}{lllcccc}
\toprule
Setting & Method & Task & MRR & R@1 & R@5 & R@10 \\
\midrule
Global & GraphCLIP & N2T
& 7.56{\tiny $\pm$1.92}
& 4.62{\tiny $\pm$1.31}
& 10.07{\tiny $\pm$2.71}
& 13.70{\tiny $\pm$3.36} \\
& & T2N
& 38.85{\tiny $\pm$1.78}
& 33.63{\tiny $\pm$1.77}
& 44.50{\tiny $\pm$1.81}
& 48.88{\tiny $\pm$2.14} \\
\rowcolor{gray!15}
& \ourmodel & N2T
& \textbf{9.84}{\tiny $\pm$3.64}
& \textbf{5.74}{\tiny $\pm$2.69}
& \textbf{13.47}{\tiny $\pm$4.58}
& \textbf{18.23}{\tiny $\pm$5.64} \\
\rowcolor{gray!15}
& & T2N
& \textbf{95.62}{\tiny $\pm$0.27}
& \textbf{93.13}{\tiny $\pm$0.45}
& \textbf{98.64}{\tiny $\pm$0.23}
& \textbf{99.24}{\tiny $\pm$0.14} \\
\midrule
Category & GraphCLIP & N2T
& 19.75{\tiny $\pm$2.45}
& 12.09{\tiny $\pm$2.16}
& 26.79{\tiny $\pm$2.96}
& 35.59{\tiny $\pm$3.41} \\
& & T2N
& 44.33{\tiny $\pm$1.64}
& 38.21{\tiny $\pm$1.64}
& 50.78{\tiny $\pm$1.97}
& 48.88{\tiny $\pm$2.14} \\
\rowcolor{gray!15}
& \ourmodel & N2T
& \textbf{20.99}{\tiny $\pm$3.94}
& \textbf{12.60}{\tiny $\pm$3.35}
& \textbf{28.92}{\tiny $\pm$4.91}
& \textbf{38.17}{\tiny $\pm$5.16} \\
\rowcolor{gray!15}
& & T2N
& \textbf{96.79}{\tiny $\pm$0.23}
& \textbf{94.89}{\tiny $\pm$0.26}
& \textbf{99.05}{\tiny $\pm$0.34}
& \textbf{99.49}{\tiny $\pm$0.16} \\
\bottomrule
\end{tabular}}
\end{minipage}
\end{table*}
\textbf{Node Classification.}
Table~\ref{tab:node-classification} reports the zero-shot node classification accuracy on seven target datasets under the \(10\%\) text sparsity pre-training setting. \ourmodel achieves the best overall performance among all compared methods and shows consistent improvements over strong graph--text alignment baselines. Compared with GraphCLIP, \ourmodel improves the average accuracy from \(55.16\%\) to \(59.88\%\), demonstrating stronger transferability under sparse textual supervision. The gains are especially evident on Cora, CiteSeer, Books-History, and Ele-Photo. We further observe that our learned node representations better capture the semantic distinctions among nodes compared with GraphCLIP, for a visual comparison, see Appendix~\ref{Visualization}.

\textbf{Link Prediction.} We further evaluate zero-shot link prediction for transferable structural information. The pre-trained model is applied to target datasets without task-specific fine-tuning. For link prediction, we report average AUC scores and standard deviations over five runs with different random seeds. As shown in Table~\ref{tab:link-prediction}, \ourmodel achieves strong zero-shot performance. On Cora, WikiCS, and History, it obtains the best AUC scores, outperforming baselines.

\textbf{Cross-modal Retrieval.}We further evaluate cross-modal retrieval under both global-level and category-level settings, including node-to-text (N2T) and text-to-node (T2N) retrieval on the WikiCS dataset. As shown in Table~\ref{tab:retrieval-global}, \ourmodel consistently outperforms GraphCLIP across almost all retrieval metrics in both settings. The improvement is particularly significant for T2N retrieval, where \ourmodel achieves much higher MRR and Recall scores, indicating that the learned text representations can more accurately retrieve the corresponding graph instances. Under the category-level setting, \ourmodel also maintains consistent gains, suggesting that the proposed sparse structural alignment improves not only global graph--text matching but also fine-grained category-aware retrieval. 

\begin{figure*}[t]
\centering
\begin{minipage}[t]{0.32\textwidth}
\centering
\includegraphics[width=\linewidth]{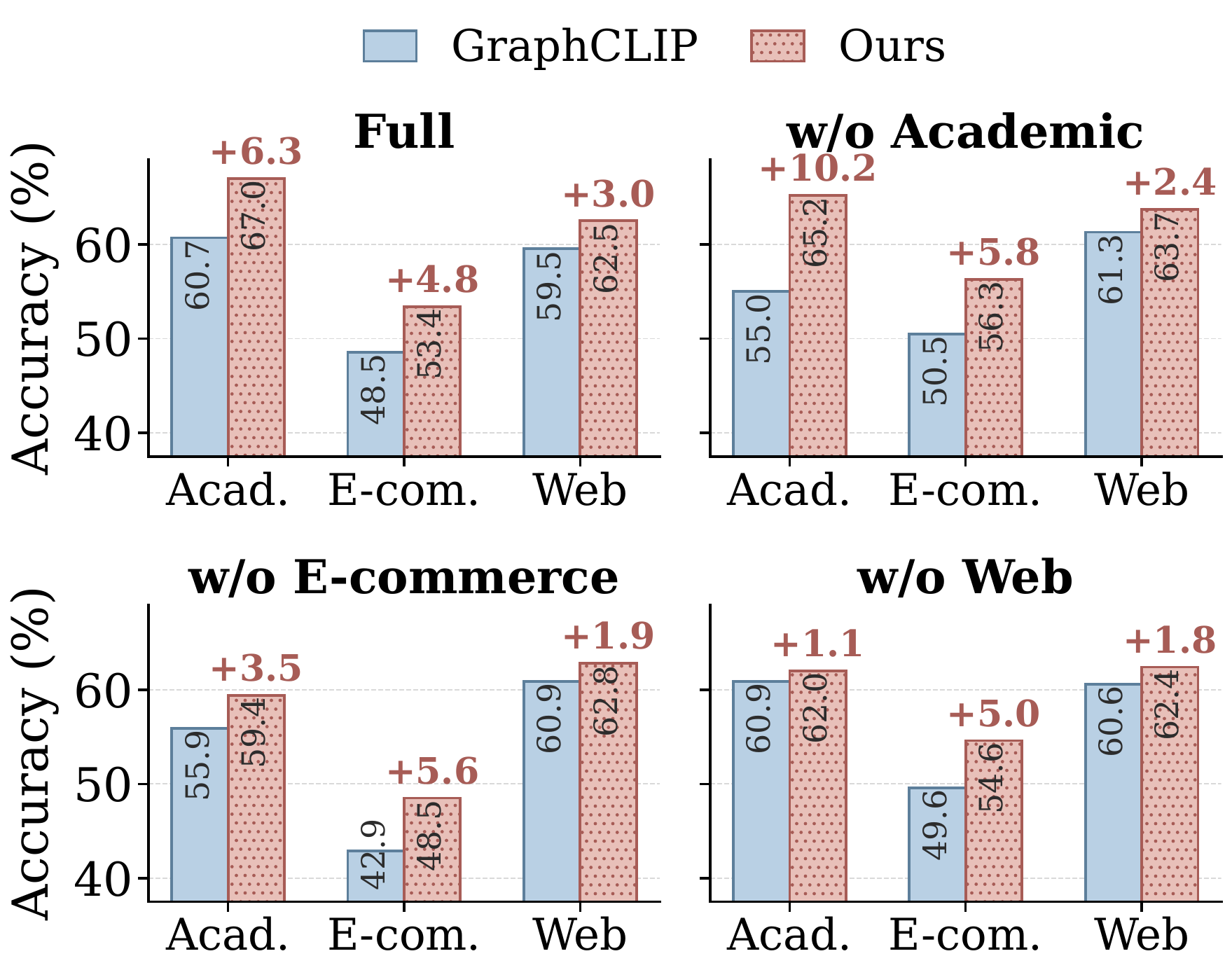}
\caption{Effect of source-domain composition on zero-shot transfer.}
\label{fig:source-domain}
\end{minipage}
\hfill
\begin{minipage}[t]{0.33\textwidth}
\centering
\includegraphics[width=\linewidth]{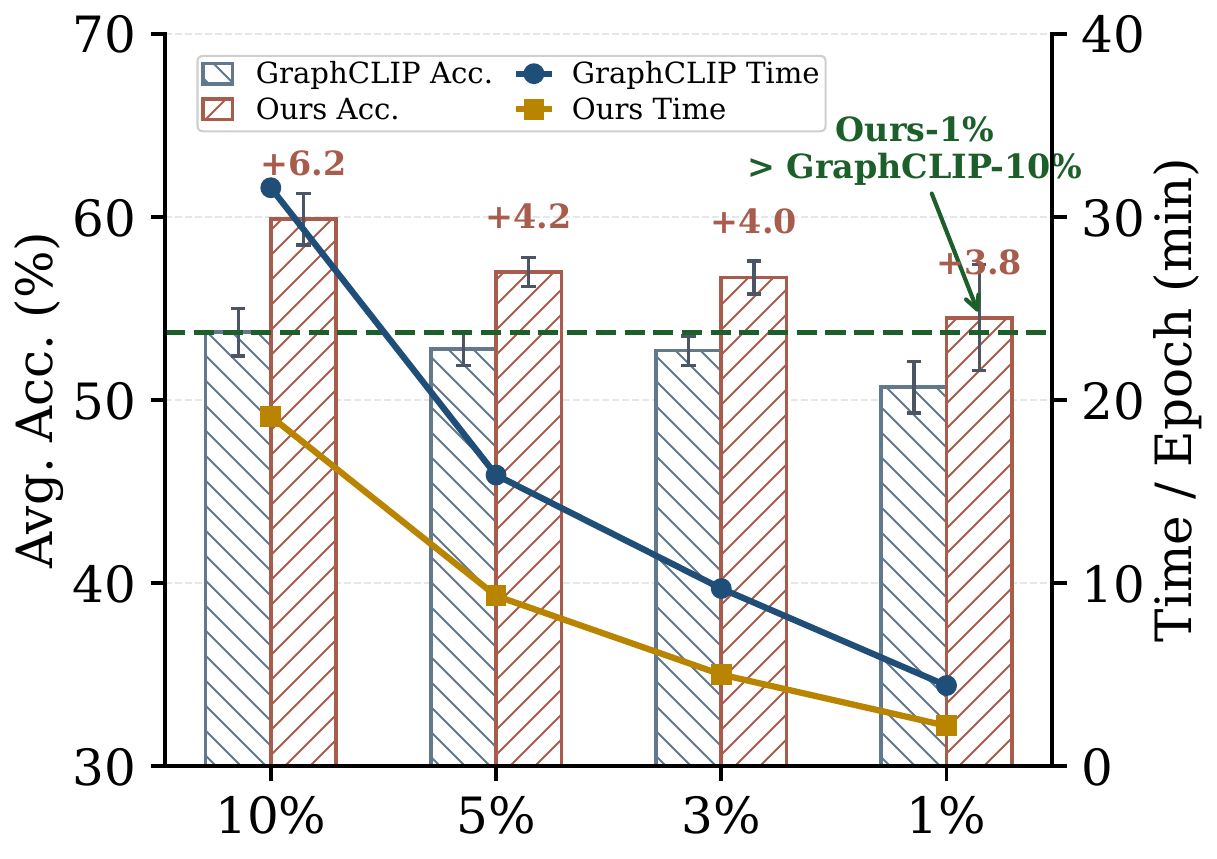}
\caption{Performance-efficiency trade-off under varying text sparsity levels.}
\label{fig:sparsity-efficiency}
\end{minipage}
\hfill
\begin{minipage}[t]{0.32\textwidth}
\centering
\includegraphics[width=\linewidth]{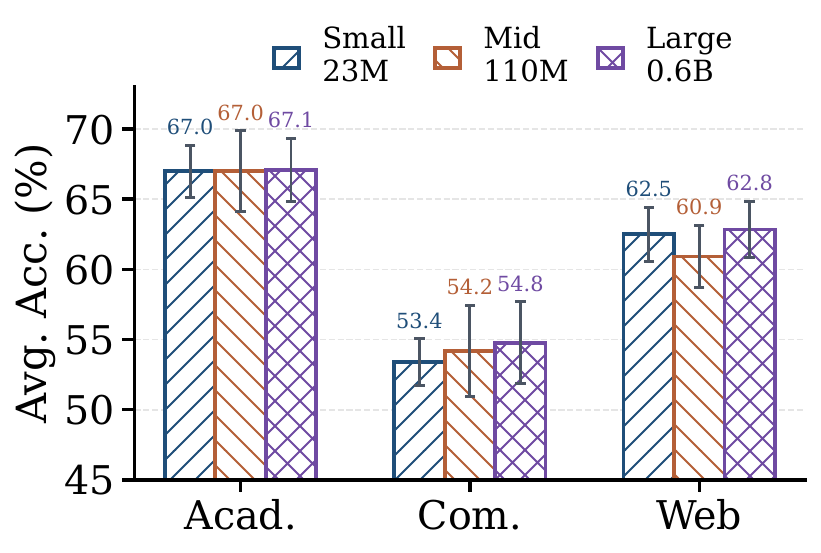}
\caption{Effect of language encoder scale on zero-shot transfer.}
\label{fig:model-scale}
\end{minipage}
\end{figure*}

\subsection{RQ2: Explorations on Source Datasets}
\label{Q2}
To investigate how source-domain composition affects transferability, we compare \ourmodel with GraphCLIP under different source dataset configurations. Following GraphCLIP, we consider four settings: using all source datasets, removing academic datasets, removing e-commerce datasets, and removing web datasets. We report the average target accuracy grouped by target domain, including academic, e-commerce, and web domains. As shown in Figure~\ref{fig:source-domain}, \ourmodel consistently outperforms GraphCLIP across all source-domain configurations and target-domain groups. The gains remain clear even when one source domain is removed, indicating that \ourmodel does not simply rely on domain-specific shortcuts from the source data. Instead, the proposed sparsity-aware structural alignment encourages more transferable graph--text representations that remain effective under reduced source-domain diversity.

\subsection{RQ3: Zero-Shot Robustness \& Efficiency Experiment }
\label{Q3}
\vspace{-1mm}
\textbf{Varying Text Sparsity: Performance and Efficiency.} 
We further evaluate \ourmodel under text sparsity levels of 1\%, 3\%, 5\% and 10\%, aiming to explore whether promising graph-text alignment can be preserved when only limited node textual attributes are accessible during pre-training. As illustrated in Figure~\ref{fig:sparsity-efficiency}, \ourmodel steadily outperforms GraphCLIP under all sparsity configurations; notably, the model trained merely with 1\% textual attributes achieves even better performance than GraphCLIP trained with 10\% textual attributes, demonstrating superior sparsity generalization and higher data efficiency. Apart from consistent performance gains, our method also achieves substantial computational acceleration.The proposed approach considerably reduces per-epoch training overhead, yielding markedly lower runtime cost compared with GraphCLIP. These observations confirm that our framework delivers competitive zero-shot transfer performance with limited textual attributes and lower computational overhead, making it well-suited for large-scale sparse text-attributed graph scenarios. Detailed time and space complexity analysis of \ourmodel is provided in the Appendix~\ref{Complexity}.

\textbf{Varying Language Model Scale.} 
We evaluate the effect of language encoder scale by replacing the text encoder with MiniLM-L6~\cite{wang2020minilm}, E5~\cite{wang2022text}, and Qwen3-0.6B~\cite{yang2025qwen3} (small, medium, and large). As shown in Figure~\ref{fig:model-scale}, performance differences are minor, with the small encoder already competitive. While the large encoder slightly improves some domains, overall gains are limited, indicating that \ourmodel's transferability is primarily driven by its sparsity-aware, structure-enhanced alignment rather than encoder scale.

\textbf{Hyperparameter Sensitivity.} 
\ourmodel shows stable performance across a wide range of hyperparameter settings, including $\alpha$ (structural reconstruction), $\mu$ (cross-domain risk balance), and $\nu$ (density estimation), consistently outperforming GraphCLIP (detailed results in Appendix~\ref{app:HS}). We also report the fixed choice of the three reliability factors, $\lambda_d$, $\lambda_a$, and $\lambda_c$.

\subsection{RQ4: Ablation Study}
\vspace{-2mm}
\label{Q5}
\begin{figure}[!h]
    \centering
    \begin{minipage}{0.6\linewidth}
        \vspace{0pt}
        As shown in Figure~\ref{fig:ablation}, removing any key component leads to performance degradation, confirming that all modules contribute to the final results. Specifically, \textbf{w/o Structural Reconstruction} drops notably on the e-commerce and Web domains, highlighting the importance of topology-aware auxiliary supervision under sparse textual conditions; \textbf{w/o Risk Balancing} also reduces performance, indicating that sparse-aware cross-domain risk balancing helps mitigate domain-biased alignment and improve transferability; \textbf{w/o Summary} consistently degrades performance, particularly on the academic domain, demonstrating that semantic summaries are essential for stable graph--text alignment.

    \end{minipage}
    \hfill
    \begin{minipage}{0.38\linewidth}
        \vspace{0pt}
        \centering
        \includegraphics[width=\linewidth]{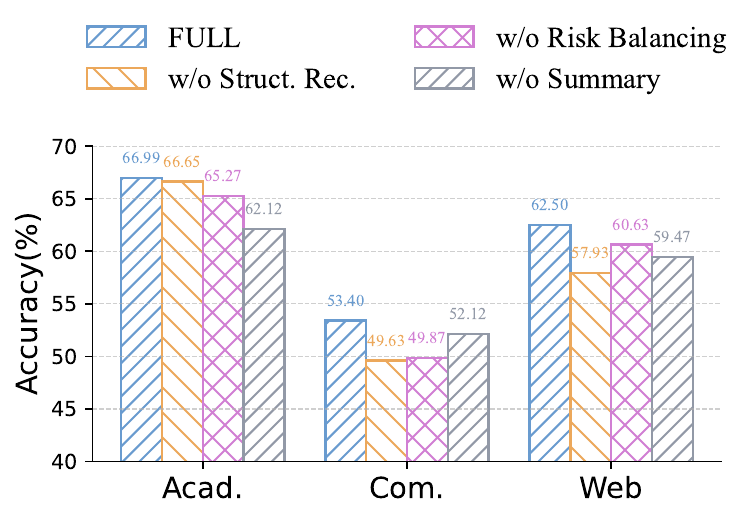}
        \caption{The study results.}
        \label{fig:ablation}
    \end{minipage}
\end{figure}

\vspace{-1mm}
\section{Related Work}
\vspace{-2mm}
\label{RelatedWork}

\noindent \textbf{Graph–Text Alignment}. 
Inspired by CLIP~\cite{clip}, contrastive dual-encoder frameworks have become the dominant paradigm for graph--text alignment on text-attributed graphs. Methods such as GraphCLIP~\cite{zhu2025graphclip}, G2P2~\cite{G2P2}, and GRENADE~\cite{li2023grenade} construct graph-text positive pairs and map them into a shared space. However, they rely on fixed one-to-one alignment, limiting their ability to capture many-to-many semantic relations. ConGraT~\cite{brannon2024congrat} introduces graph similarity to generate soft alignment targets, and ADAligner~\cite{adaligner} dynamically balances one-to-one and many-to-many objectives based on batch-level data quality. These approaches, however, assume reliable node texts and summaries; in sparse scenarios, missing textual attributes weaken semantic evidence and may mislead supervision.

\noindent \textbf{Cross-domain Generalization}.
Cross-domain generalization aims to learn models from multiple source domains that can transfer to unseen target domains. Its core objective is to reduce the model's reliance on domain-specific correlations and capture stable predictive mechanisms across domains. Existing studies mitigate distribution shifts from different perspectives, including domain-adversarial learning \cite{DANN}, distribution alignment \cite{DAN,CORAL}, invariant risk minimization \cite{arjovsky2019invariant}, risk variance regularization \cite{VREX}, and distributionally robust optimization \cite{GroupDRO}. In graph learning, recent works have further investigated OOD generalization on graphs. For example, GIL~\cite{GIL} captures stable structural information through invariant subgraph learning, while EERM~\cite{EERM} improves graph OOD generalization through environment exploration. However, sparse graph--text pre-training faces a more complex scenario, where textual attribute missingness, structural variation, and source-domain distribution shifts jointly affect graph--text alignment.

\section{Conclusion}
\vspace{-2mm}
\label{conclusion}
This paper studied graph--text pre-training on sparse text-attributed graphs and presented \ourmodel, a sparsity-aware and structure-enhanced LLM-as-Aligner framework. \ourmodel disentangled content and structural representations, introduced structure-oriented reconstruction, and applied sparse-aware cross-domain risk balancing to improve robust graph--text alignment. Experiments on zero-shot node classification, link prediction, and cross-modal retrieval demonstrated that \ourmodel achieves strong transferability across sparse settings and unseen target graphs.

\bibliographystyle{plain}
\bibliography{ref.bib}
\newpage
\appendix
\appendix

\section{Appendix Overview}
The appendix is structured as follows:
\begin{itemize}[leftmargin=*]
    \item Section~\ref{limits} discusses limitations and future work.
    \item Section~\ref{impact} provides the broader impact statement.
    \item Section~\ref{app:theoretical} presents the theoretical analysis of the proposed global-domain density ratio weighting.
    \item Section~\ref{Datasets} details the experimental setup, including datasets and baselines.
    \item Section~\ref{sec:implementation_details} provides implementation details of \ourmodel, including pre-training settings and zero-shot prompts.
    \item Section~\ref{app:exp} reports additional experimental results, including hyperparameter sensitivity and visualization analysis.
    \item Section~\ref{Complexity} analyzes the time and space complexity of S$^2$Aligner.
\end{itemize}

\section{Limitations and Future Work}
\label{limits}
Despite the effectiveness of our proposed \ourmodel in capturing topological cues and facilitating text-graph alignment via the structure reconstruction consistency constraint, the current structural enhancement paradigm still performs optimally on sparsely text-attributed graphs with regular and clear topological connections. When facing highly fragmented graph structures with numerous isolated nodes and ambiguous topological relationships, the useful structural cues can be easily contaminated by redundant noises, leading to the dilution of structural enhancement signals and thus degenerating the robustness of cross-modal alignment. To mitigate this limitation, future work will explore adaptive topology-aware modeling and structure denoising strategies to capture latent connections in fragmented graph structures, and dynamically adjust the strength of structural constraints according to the degree of fragmentation, thereby improving the alignment performance on disordered and complex topologies.

\section{Impact Statements}
\label{impact}
This work contributes to the development of graph--text pre-training by improving robust graph--text alignment on sparse text-attributed graphs. The proposed framework can be applied to low-resource graph--text scenarios, such as academic network analysis, product graph understanding, recommendation systems, and social information analysis. It may help reduce the reliance on large-scale manually annotated data and improve the usability and generalization ability of graph--text models in practical applications. Since this work mainly focuses on methodological and technical improvements, and the datasets used in our experiments do not involve sensitive personal information or privacy risks, we do not expect the proposed method to have obvious negative societal impacts.

\section{Theoretical Analysis}
\label{app:theoretical}

\subsection{Proof of Theorem 3.1}
\label{app:proof-theorem-3-1}

This proof follows the standard paradigm of weighted risk invariance and general position in invariant learning~\cite{WRI}.
In contrast to the pairwise weighting strategy that uses a single reference domain, we propose a \emph{global-domain density ratio weighting} derived from a unified global mixture distribution, which serves as an unbiased and stable anchor for multi-domain generalization.

\paragraph{Notation}
\begin{itemize}[leftmargin = 16pt]
    \item $Z$: Shared invariant structural component across all domains.
    \item $V_e$: Domain-private spurious component of the $e$-th domain.
    \item $p_e(Z)$: Marginal density of invariant feature $Z$ in domain $e$.
    \item $p_0(Z) = \sum_e \pi_e p_e(Z)$: Global mixture distribution.
    \item Our global weighting function:
    \begin{equation*}
        r_e(Z) \triangleq \frac{p_0(Z)}{p_e(Z)}.
    \end{equation*}
    \item Linear predictor:
    \begin{equation*}
        f(Z, V_e) = \mathbf{w}^\top \begin{bmatrix} Z^\top & V_e^\top \end{bmatrix}^\top.
    \end{equation*}
    \item Parameter decomposition:
    \begin{equation*}
        \mathbf{w} = \begin{bmatrix} \mathbf{w}_{\text{inv}}^\top & \mathbf{w}_{\text{spu}}^\top \end{bmatrix}^\top,
    \end{equation*}
    where $\mathbf{w}_{\text{inv}}$ corresponds to invariant features $Z$, and $\mathbf{w}_{\text{spu}}$ corresponds to domain-private spurious features $V_e$.
\end{itemize}

\paragraph{Assumptions}
Let $\mathcal{E}$ denote the set of training domains. We assume:
\begin{enumerate}[leftmargin = 16pt]
    \item Feature decomposition: inputs consist of invariant $Z$ and domain-specific $V_e$.
    \item Conditional independence: $Y \perp\!\!\!\perp V_e \mid Z$.
    \item Invariant label condition: $p_{e_i}(Y \mid Z) = p_{e_j}(Y \mid Z)$ for all $e_i,e_j \in \mathcal{E}$.
    \item Non-degenerate density: $p_e(Z) > 0$ for all $e \in \mathcal{E}$.
\end{enumerate}

\paragraph{Proposition 1 }
If the assumptions hold and the predictor $f$ depends only on $Z$, then the weighted risks are equal across all domains.

\begin{proof}
The weighted risk for domain $e$ is:
\begin{equation*}
    R_e = \iiint_{\mathcal{Y}\mathcal{Z}\mathcal{V}_e} p_e(Z,V_e,Y) \cdot \ell(f(Z),Y) \cdot r_e(Z) \,\mathrm{d}V_e \mathrm{d}Z \mathrm{d}Y.
\end{equation*}
Marginalizing out $V_e$:
\begin{equation*}
    R_e = \iint_{\mathcal{Y}\mathcal{Z}} \ell(f(Z),Y) \cdot r_e(Z) \cdot p_e(Z,Y) \,\mathrm{d}Z \mathrm{d}Y.
\end{equation*}
Substituting $p_e(Z,Y) = p_e(Y \mid Z)p_e(Z)$ and $r_e(Z) = p_0(Z)/p_e(Z)$:
\begin{equation*}
    R_e = \iint_{\mathcal{Y}\mathcal{Z}} \ell(f(Z),Y) \cdot p_0(Z) \cdot p_e(Y \mid Z) \,\mathrm{d}Z \mathrm{d}Y.
\end{equation*}
Since $p_0(Z)$ and $p_e(Y \mid Z)$ are domain-agnostic, $R_e$ is identical for all $e \in \mathcal{E}$.
\end{proof}

\paragraph{Definition 1 (General Position)}
Let $k$ be the number of domains such that $\binom{k}{2} > 2d_{\text{spu}}$. Define:
\begin{equation*}
\begin{aligned}
    \Sigma_{i,j} &= \mathbb{E}_{P_i}\!\left[ [Z,V_e][Z,V_e]^\top \cdot r_i(Z) \right], \\
    \mu_{i,j} &= 2 \cdot \mathbb{E}_{P_i}\!\left[ [Z,V_e] Y \cdot r_i(Z) \right].
\end{aligned}
\end{equation*}
Domains are in \emph{general position} if for any scalar $\gamma \in \mathbb{R}$ and nonzero $\mathbf{x} \in \mathbb{R}^{d_{\text{spu}}}$:
\begin{equation*}
    \dim\left(\mathrm{span}\!\left\{ (\Sigma_{i,j} - \Sigma_{j,i})\mathbf{x} + \gamma(\mu_{i,j} - \mu_{j,i})\mathbf{x} \right\}_{i,j \in [k]}\right) = d_{\text{spu}}.
\end{equation*}
This definition follows the standard formulation in invariant learning~\cite{arjovsky2019invariant,WRI}.

\paragraph{Theorem 1 (Elimination of Spurious Features)}
Under linear regression setting, if $|\mathcal{E}| > d_{\text{spu}}$ and domains are in general position, then any linear predictor satisfying global weighted risk invariance must have:
\begin{equation*}
    \mathbf{w}_{\text{spu}} = \mathbf{0}.
\end{equation*}
Our global-domain density ratio weighting satisfies general position with probability 1.

\begin{proof}
For any two domains, risk invariance gives:
\begin{equation*}
\begin{aligned}
    &\iiint \left( \mathbf{w}^\top [Z,V_e] - Y \right)^2 p_1 r_1 \,\mathrm{d}V_e \mathrm{d}Z \mathrm{d}Y \\
    =\ &\iiint \left( \mathbf{w}^\top [Z,V_e] - Y \right)^2 p_2 r_2 \,\mathrm{d}V_e \mathrm{d}Z \mathrm{d}Y.
\end{aligned}
\end{equation*}
Expanding leads to:
\begin{equation*}
    \mathbf{w}^\top \Sigma_{12} \mathbf{w} - \mathbf{w}^\top \mu_{12} + c_{12}
    = \mathbf{w}^\top \Sigma_{21} \mathbf{w} - \mathbf{w}^\top \mu_{21} + c_{21}.
\end{equation*}
By $r_e(Z)p_e(Z) = p_0(Z)$, invariant terms cancel, leaving constraints only on $\mathbf{w}_{\text{spu}}$.
Under general position, the system has full rank, so $\mathbf{w}_{\text{spu}} = \mathbf{0}$.

Following the measure-theoretic property for density ratio weights~\cite{WRI}, our global weighting satisfies general position with probability 1.
\end{proof}

\section{Experimental Setup Details}
\label{Datasets}
\subsection{Datasets}
To comprehensively assess the transfer capabilities and sparsity tolerance of our \ourmodel, we conduct cross-domain evaluations following the zero-shot protocol established by \cite{zhu2025graphclip}. The empirical study incorporates 12 distinct Text-Attributed Graphs (TAGs). These networks are partitioned into a set of 5 source domains for the pre-training phase, and 7 target domains for downstream verification.

\textbf{Source Datasets (Pre-training):} 
The pre-training corpus encompasses major online scenarios including academic literature, e-commerce retail, and social interactions. We adopt the complete topology for both academic and social networks, alongside a sampled subset of the retail graph to maintain domain balance.
\begin{itemize}[leftmargin = 8pt]
    \item \textbf{ogbn-ArXiv}~\cite{hu2020open}: Constructed from the Microsoft Academic Graph, this dataset forms a directed network mapping citations between computer science manuscripts. We utilize the available title and abstract information as raw features for each document. The downstream objective entails assigning these documents into 40 distinct topical categories.
    \item \textbf{ArXiv-2023}~\cite{he2023explanations}: Acting as a temporally updated counterpart to the aforementioned dataset, this collection captures recent publication citations from the year 2023 onwards. Models are similarly tasked with predicting the correct subject class out of 40 possible options for every scholarly article.
    \item \textbf{PubMed}~\cite{sen2008collective}: This graph organizes medical literature concerning diabetes research. Node entities must be classified into three specific clinical domains, separating experimental methodologies from Type 1 and Type 2 diabetes studies.
    \item \textbf{ogbn-Products}~\cite{hu2020open}: Sourced from Amazon's product listings, this network links items frequently bought together by consumers. Algorithms must accurately predict which of the 47 high-level retail categories a specific merchandise belongs to.
    \item \textbf{Reddit}~\cite{huang2024can}: This platform-derived graph captures online interactions, connecting individuals who have engaged in mutual conversation threads. Textual attributes originate from past user posts, and the core task involves predicting the popularity metric of the accounts.
\end{itemize}

\textit{Sparsity Setting:} To properly mimic the prevalent issue of missing information in real-world environments, we intentionally drop node attributes during the training stage. Under the default 10\% sparsity configuration, algorithms are only granted access to the raw texts of a minor fraction (10\%) of the total entities. The remaining vertices are strictly initialized with zero-vectors and ''unknown'' text flags, forcing the framework to heavily rely on topological propagation to infer absent semantics.

\textbf{Target Datasets (Downstream Evaluation):}
We verify the robust transferability of our model using 7 varied networks. During this phase, zero-shot inference is strictly conducted without applying any gradient updates on the target structures. For clarity, we categorize these datasets into three groups: academic citation networks (Cora, CiteSeer), product networks (Amazon-Computers, Amazon-Photo, Amazon-History), and web networks (WikiCS, Instagram). During this phase, zero-shot inference is strictly conducted without applying any gradient updates on the target structures.

\begin{itemize}[leftmargin = 8pt]
    \item \textbf{Cora}~\cite{sen2008collective}: A widely adopted academic benchmark containing 2,708 machine learning papers. References connect the manuscripts, and the model must identify one of seven corresponding research subfields.
    \item \textbf{CiteSeer}~\cite{sen2008collective}: Containing 3,186 documents spanning six computer science domains, this collection challenges the model to infer the correct research category by analyzing abstracts alongside paper titles.
    \item \textbf{WikiCS}~\cite{mernyei2020wiki}: Hyperlinks connect various computer science articles from Wikipedia to construct this structure. Algorithms leverage the main article texts to assign nodes into ten respective academic branches.
    \item \textbf{Amazon-Computers} and \textbf{Amazon-Photo}~\cite{yan2023comprehensive}: Both datasets emerge from Amazon's electronics inventory, where topological edges capture co-viewing or co-purchasing behaviors. Textual features incorporate highly-rated buyer reviews, demanding models to sort items into 10 and 12 distinct classes, respectively.
    \item \textbf{Amazon-History}~\cite{yan2023comprehensive}: Extracted specifically from the books category, relationships here also reflect consumer co-purchasing trends. We rely on book summaries and titles to predict a 12-way thematic classification.
    \item \textbf{Instagram}~\cite{huang2024can}: Nodes designate accounts on this social media platform, with following behaviors acting as structural edges. The core objective demands differentiating standard personal profiles from commercial entities.
\end{itemize}

\subsection{Baselines}
To validate the effectiveness of \ourmodel under extreme attribute scarcity, we conduct extensive comparisons against a robust suite of competitive algorithms. These existing solutions fall into four primary paradigms:

\textbf{1. Text-only Language Models:} 
These approaches independently process the raw sentences of each entity while completely ignoring topological connections.
\begin{itemize}[leftmargin = 8pt]
    \item \textbf{SBERT}~\cite{reimers2019sentence}: We incorporate standard sentence embedding architectures to produce dense semantic representations, specifically evaluating the \textit{all-MiniLM-L6-v2} and \textit{multi-qa-distilbert-cos-v1} variants.
    \item \textbf{Qwen3 (0.6B)}~\cite{yang2025qwen3}: A highly capable large language model acting as a robust generative baseline, tested on its ability to infer node labels using solely isolated contextual texts.
\end{itemize}

\textbf{2. Text-Attributed Graph (TAG) \& LLM-based Methods:}
These recent strategies strive to fuse structural patterns with the semantic comprehension capabilities of language models.
\begin{itemize}[leftmargin = 8pt]
    \item \textbf{GraphGPT}~\cite{tang2024graphgpt}: This architecture maps topological properties into discrete tokens and employs a dual-stage instruction fine-tuning process to synchronize GNN outputs with an LLM's semantic space.
    \item \textbf{LLaGA}~\cite{chen2024llaga}: By restructuring graph components into sequential prompts, this framework projects network topologies directly into the embedding layers of language models for generalized processing.
    \item \textbf{OFA}~\cite{liu2023one}: This prompting strategy textualizes diverse topological properties—including edges and neighbors—allowing a single LLM to handle varied graph-based tasks via natural language comprehension.
    \item \textbf{ZeroG}~\cite{li2024zerog}: To enable cross-domain generalization, this method converts structural tasks into text matching problems by integrating lightweight adapters alongside prompt-driven subgraph sampling.
\end{itemize}

\textbf{3. Graph Self-Supervised Learning (SSL) Models:}
Focusing primarily on topology and dense features, these methodologies leverage traditional graph neural networks.
\begin{itemize}[leftmargin = 8pt]
    \item \textbf{DGI}~\cite{velickovic2018deep}: A foundational self-supervised strategy that maximizes mutual information by distinguishing authentic node-graph representations from artificially corrupted counterparts.
    \item \textbf{GRACE}~\cite{zhu2020deep}: This contrastive learning approach generates augmented views of the same network, pulling identical node embeddings closer while pushing disparate nodes apart.
    \item \textbf{BGRL}~\cite{thakoor2021bootstrapped}: Drawing inspiration from BYOL architectures, this framework trains an online encoder against a momentum-updated target network, eliminating the necessity for negative sample pairs.
    \item \textbf{GraphMAE}~\cite{hou2022graphmae}: Functioning as a graph autoencoder, it deliberately conceals certain node attributes during the encoding phase and trains the decoder to accurately reconstruct those hidden features.
    \item \textbf{G2P2}~\cite{G2P2}: To improve performance in data-scarce scenarios, this technique blends structural pre-training signals with prompt-based text classification mechanisms.
\end{itemize}

\textbf{4. State-of-the-Art Graph-Text Aligners:}
Serving as our primary zero-shot competitors, these methods focus explicitly on synchronizing semantic and structural spaces.
\begin{itemize}[leftmargin = 8pt]
    \item \textbf{GraphCLIP}~\cite{zhu2025graphclip}: This framework relies heavily on contrastive alignment objectives. It synthesizes subgraph summaries to align embedding spaces, providing strong zero-shot graph--text alignment performance.
    
    \item \textbf{ADAligner}~\cite{adaligner}: ADAligner also employs contrastive alignment but integrates dynamic quality assessments to filter noisy signals during the alignment procedure, enhancing robustness against unreliable subgraph-text pairs.
\end{itemize}

\section{Implementation Details of \ourmodel}
\label{sec:implementation_details}

In this section, we provide the implementation details of \ourmodel. We first describe the experimental setup during the pre-training phase, and then present the prompts used for zero-shot learning.

\subsection{Pre-training Phase}

For \ourmodel, only a small number of hyperparameters need to be tuned. In our main experiments, we adopt the AdamW~\cite{adam} optimizer, with both the learning rate and weight decay set to \(1\times10^{-5}\). The graph encoder is implemented with GraphGPS~\cite{rampavsek2022recipe}, which consists of 12 layers with a hidden dimension of 1024. For the text encoder, we use a fine-tuned MiniLM~\cite{wang2020minilm} model with 6 layers and a hidden dimension of 384.

To align graph and text representations into a unified semantic space, we employ a projector to transform the 1024-dimensional graph representations into the same 384-dimensional space as the text representations. During pre-training, we optimize only the parameters of the graph encoder and the projector, while keeping the text encoder frozen to reduce training costs and mitigate catastrophic forgetting. The pre-training process is conducted for 10 epochs on an H100-80G GPU.

\subsection{Zero-shot Learning}

\textbf{Zero-shot Node Classification.} For zero-shot learning, we follow the same prompt construction strategy as the baseline GraphCLIP. Specifically, label information is incorporated into label-specific sentences to maintain consistency with the graph--text alignment format used during pre-training. Table~\ref{tab:zeroshot_prompts} presents the prompts designed for different datasets, where \(\{\text{class}\}\) denotes the label text of the target node, and \(\{\text{class\_desc}\}\) denotes the descriptive sentence generated by a large language model to further elaborate the semantic meaning of the label.

\begin{table}[t]
\centering
\caption{Prompts used for zero-shot learning of \ourmodel.}
\label{tab:zeroshot_prompts}
\begin{tabular}{ll}
\toprule
Dataset & Prompt \\
\midrule
Cora & \texttt{this paper has a topic on \{class\} \{class\_desc\}} \\
CiteSeer & \texttt{good paper of \{class\} \{class\_desc\}} \\
WikiCS & \texttt{it belongs to \{class\} research area \{class\_desc\}} \\
Instagram & \texttt{\{class\} \{class\_desc\}} \\
Ele-Photo & \texttt{this product belongs to \{class\} \{class\_desc\}} \\
Computers & \texttt{is \{class\} category \{class\_desc\}} \\
History & \texttt{this book belongs to \{class\} \{class\_desc\}} \\
\bottomrule
\end{tabular}
\end{table}

\textbf{Link Prediction.} 
For link prediction, the model computes the similarity between the source node embedding and the target node embedding. An edge is predicted to exist if the similarity score exceeds a threshold of 0.5. We evaluate the predictions using standard metrics such as AUC, F1, Precision, and Recall to quantify how well the predicted edges match the ground-truth connections in the graph.

\textbf{Node-Text Retrieval.} 
For node-text retrieval, we perform both \textit{Node-to-Text} and \textit{Text-to-Node} tasks. 
- In \textit{Node-to-Text} retrieval, each node embedding is used to rank all candidate text embeddings according to cosine similarity, and the top-k results are returned as predictions.  
- In \textit{Text-to-Node} retrieval, each text embedding is used to rank all candidate node embeddings similarly.  
Metrics such as Mean Reciprocal Rank (MRR) and Recall@k are reported to evaluate retrieval quality and the model's ability to semantically align nodes with text.

\section{Additional Results for Experiments}
\label{app:exp}
\subsection{Hyperparameter Sensitivity Analysis}
\label{app:HS}
We evaluate the sensitivity of our key hyperparameters \(\alpha\), \(\mu\) and \(\nu\), which control the weights of structural reconstruction loss, cross-domain risk balance loss and density estimation loss, respectively. We vary each hyperparameter in the range \(\{0.1,0.5,1.0,2.0,3.0\}\) and report the results with standard deviation.

\begin{figure}[H]
\centering
\subfigure[$\alpha$ sensitivity]{
\includegraphics[width=0.32\textwidth]{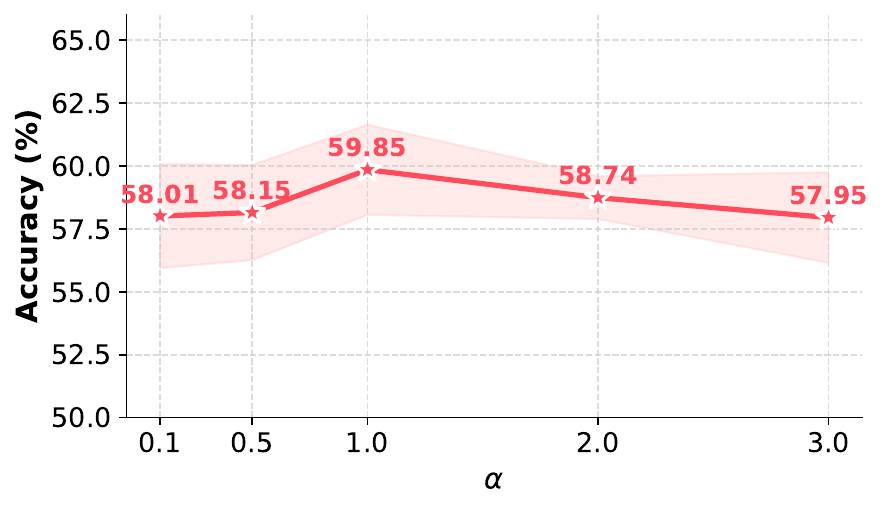}
}
\subfigure[$\mu$ sensitivity]{
\includegraphics[width=0.31\textwidth]{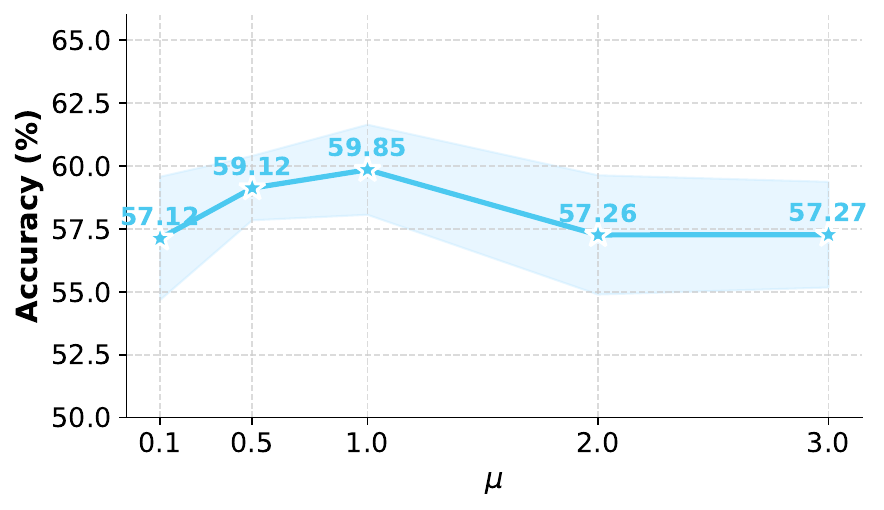}
}
\subfigure[$\nu$ sensitivity]{
\includegraphics[width=0.32\textwidth]{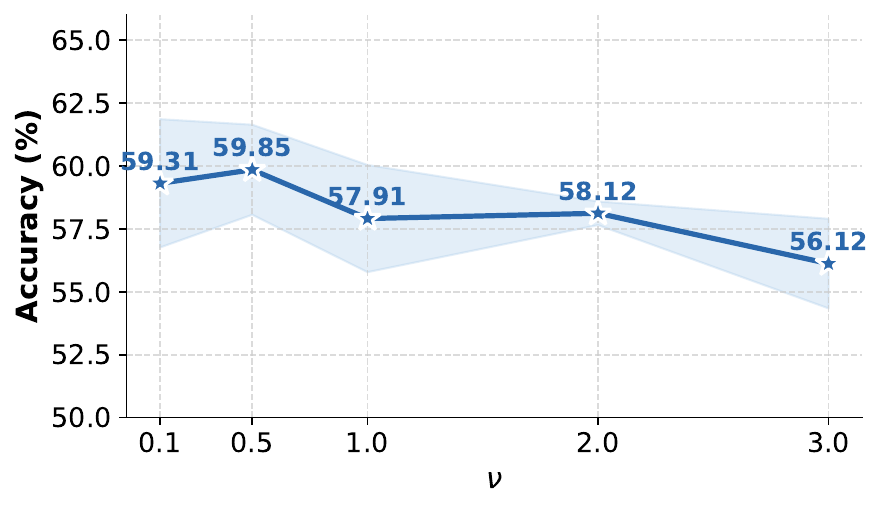}
}
\caption{Hyperparameter sensitivity analysis of \(\alpha\), \(\mu\) and \(\nu\).}
\label{fig:hyper_sen}
\end{figure}

Fig.~\ref{fig:hyper_sen}(a), Fig.~\ref{fig:hyper_sen}(b) and Fig.~\ref{fig:hyper_sen}(c) illustrate the sensitivity curves of \(\alpha\), \(\mu\) and \(\nu\), respectively. 
It is observed that our method consistently surpasses the strong baseline GraphCLIP across all parameter configurations, demonstrating the superiority and robustness of our framework. 
The model achieves optimal performance at \(\alpha=1.0\), \(\mu=1.0\) and \(\nu=0.5\). 
Either excessively small or large hyperparameters lead to performance degradation, since insufficient constraint or over-weighted loss breaks the representation learning balance. 
In addition, our method shows stable performance and small variance under different settings, which verifies that our model is insensitive to hyperparameter tuning and has good practicality.

Similarly, the reliability coefficients \(\lambda_d, \lambda_a, \lambda_c\) used in computing the sparse reliability score \(\rho_i\) are all set to 0.5. 
Since the three factors---structural stability \(d_i\), textual completeness \(a_i\), and structure--semantic consistency \(c_i\)---are normalized and inherently comparable, equal weighting is a natural choice. 
This simple setting maintains interpretability and avoids introducing additional hyperparameters, while not affecting the reliability score’s ability to reflect sample quality.

\subsection{Visualization}
\label{Visualization}
As a supplementary study of model effectiveness, we visualize both node and class representations using our model \ourmodel and the state-of-the-art baseline GraphCLIP. 
Specifically, we employ t-SNE~\cite{JMLR:v9:vandermaaten08a} to map the Cora representations into two-dimensional vectors for visualization. 
Figure~\ref{fig:cora_embeddings} shows that, compared to GraphCLIP, equipping our method \ourmodel results in: 
(1) nodes of the same class (i.e., colors in the visualization) forming more cohesive clusters in the embedding space, and 
(2) node representations from different classes being more discriminative.

\begin{figure}[H]
    \centering
    \includegraphics[width=0.98\linewidth]{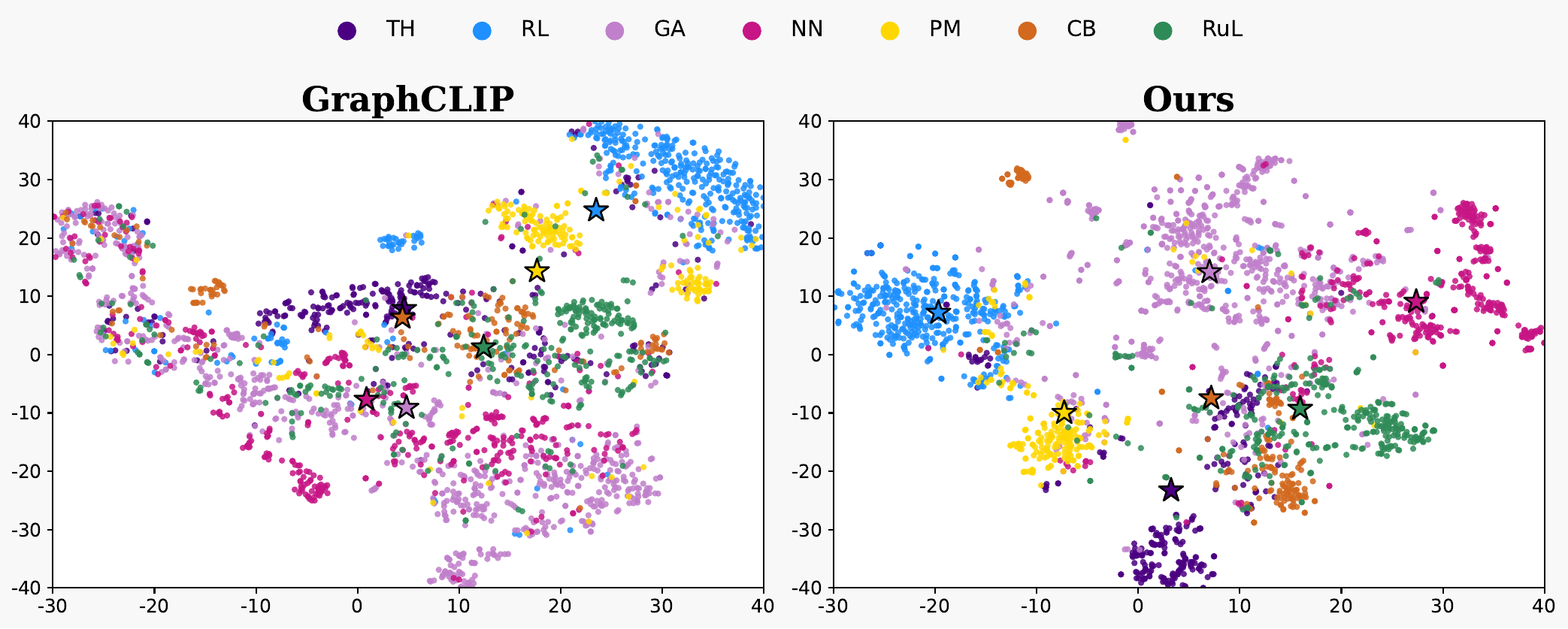} 
    \caption{Embedding visualization of Cora. Circles ($\bullet$) represent nodes, while stars ($\star$) represent classes.}
    \label{fig:cora_embeddings}
\end{figure}

\section{Complexity Analysis}
\label{Complexity}
In this section, we present the time and space complexity analysis of \ourmodel. Considering that \ourmodel is a self-supervised graph-text pre-training framework, we compare its complexity with other self-supervised graph learning methods.

\subsection{Time Complexity}
The primary time overhead arises from four components: the graph encoder, the text encoder, the structure-oriented reconstruction module, the Sparse-aware Cross-domain Risk Balancing mechanism, and computing the pre-training loss. For simplicity, we assume the layer count and hidden size of the text encoder are the same as those of the graph encoder.

The time complexity of the graph encoder is \(O(LN^2D + LND^2)\), and similarly, the time complexity of the text encoder is \(O(LN^2D + LND^2)\). The structure-oriented reconstruction and the alignment loss both operate on individual node-text pairs, leading to a time complexity of \(O(ND)\). The Sparse-aware Cross-domain Risk Balancing mechanism, including density estimation, reliability weighting, and cross-domain risk balancing, involves only lightweight feed-forward networks and sample-wise operations, resulting in a time complexity of \(O(ND)\). The pre-training loss computation has a time complexity of \(O(N^2D)\).

Thus, the total time complexity of \ourmodel is \(O(LN^2D + LND^2)\), ignoring smaller terms, which is of the same order as GraphCLIP \cite{zhu2025graphclip}, \(O(LN^2D)\).

\subsection{Space Complexity}
Each layer of the graph encoder has a space complexity of \(O(ND + D^2)\) for computing queries, keys, and values. The attention score calculation then incurs a space complexity of \(O(N^2 + ND)\), while obtaining the hidden states results in a per-layer space complexity of \(O(N^2 + ND + D^2)\). Performing these operations across all layers leads to a cumulative space complexity of \(O(LN^2 + LND + LD^2)\). Similarly, the text encoder has a space complexity of \(O(LN^2 + LND + LD^2)\). The reconstruction head, the density networks, and the reliability scoring module add negligible additional space complexity of \(O(ND)\). Finally, the contrastive and risk-balancing losses add an additional space complexity of \(O(N^2)\).

Consequently, the overall space complexity of \ourmodel amounts to \(O(LN^2 + LND + LD^2)\), which is of the same order as GraphCLIP~\cite{zhu2025graphclip}, \(O(LN^2 + LND + LD^2)\).
\newpage

\end{document}